\newcommand{\I}{\mathrm{I}}
\newcommand{\E}{\mathrm{E}}
\newcommand{\Var}{\mathrm{Var}}
\newcommand{\argmax}{arg\,max}
\newcommand{\argmin}{arg\,min}
\newcommand{\arginf}{arg\,inf}
\newtheorem{claim}{Claim}[section]
\title{Information Theoretic Interpretation of Deep learning}
\author{
  Tianchen Zhao \\
  Department of Mathematics\\
  University of Michigan\\
  Ann Arbor, MI 48104 \\
  \texttt{ericolon@umich.edu} 
}
\begin{document}

\maketitle

\begin{abstract}
  We interpret part of the experimental results of \citet{shwartz2017opening}. Inspired by these results, we established a conjecture of the dynamics of the machinary of deep neural network. This conjecture can be used to explain the counterpart result by \citet{michael2018on}.
\end{abstract}

\section{Introduction}

\subsection{Set up}
Consider a deep learning problem, the training data set $\mathcal{D}=\{(X_i, Y_i)\}_{i=1:n}$ is known, where $\{X_i\}_{i=1:n}$ are the objects of interest, and $\{Y_i\}_{i=1:n}$ are the corresponding labels, sampled from random variables $(X,Y)$ with unknown joint distribution. For example, $X$ could be the continuous random variable over $[0,1]^n$ for some large integer $n$, representing the image of a single digit, and $Y$ is a discrete random variable with integer ranging from 0 to 9. 

In practice, the data $\mathcal{D}$ is fed into a deep neural network $\mathcal{F}$ parametrized by $w$ which is of very high dimension. For each object $x$, $\mathcal{F}(\tilde{y}|x;w)$ assigns $\{\tilde{Y}=\tilde{y}\}$ a probability, and the prediction is taken from the value with highest probability. In practice this is usually achieved by using softmax function. The goal is to use an optimization algorithm to train the parameter $w$ of $\mathcal{F}$ such that the joint probability of$(X,\tilde{Y})$ matches the joint probability of $(X,Y)$.

\subsection{Motivation}
Our work is primarily motivated by \citet{45820} and \citet{3266}. In an underdeterministic problem where the number of parameter $w$ is way larger than the amount of samples $\mathcal{D}$, there are typically infinite many solutions for $w$. According to {\it Occam's razor}, "simple" solutions are usually desired. In practice, the training of a deep neural network is not explicitly regularized, so there's no obvious guarantee that the solution $w$ we trained is "simple". However, the experimental results report that the model is still steadly improving after the set of solutions have already been reached. It's commonly believed that the optimization algorithm used, named stochastic gradient descent, is improving $w$ among the solutions of the underdetermined system.

The experiments performed by \citet{shwartz2017opening} give an answer to this phenomina from an information theoretic perspective. In their experiment, they designed a binary classification problem and trained a fully connected feed-forward neural network using SGD and cross-entropy loss function.

For each epoch during the training, they discretized the values for the last feature layer $\tilde{Y}$ into 30 bins over -1 and 1(which follows from the sigmoid activation function). Then $P(\tilde{Y},X)$ can be directly approximated by running the neural nets. Then they compute
$$P(\tilde{Y}|x) = \frac{P(\tilde{Y},x)}{P(x)},$$
which is used to compute
$$P(\tilde{Y}, Y)=\sum_x P(x,Y)P(\tilde{Y}|x,Y)=\sum_x P(x,Y)P(\tilde{Y}|x).$$

$P(\tilde{Y},X)$ and $P(\tilde{Y}, Y)$ are all we need to estimate $\I(\tilde{Y};X)$ and $\I(Y;\tilde{Y})$.

Their goal is to estimate the dynamics of the mutual information $\I(\tilde{Y};X)$ and $\I(Y;\tilde{Y})$. Here $\I(\tilde{Y};X)$ can be interpreted as a measure of how much information $\tilde{Y}$ retains("encodes") from $X$, and similarly $\I(Y;\tilde{Y})$ is how much information $\tilde{Y}$ preserves("decodes") from $Y$. We encourage the reader to watch the fantastic video of the optimization process in the \textit{information plane} at \textit{https://goo.gl/rygyIT}.

In this paper we are interested in the behaviour of their last layer(in orange), which is essentially the behaviour of $\I(X;\tilde{Y})$ and $\I(Y; \tilde{Y})$. At the beginning of the training, both $\I(X;\tilde{Y})$ and $\I(Y;\tilde{Y})$ increases, meaning the network is "memorizing" the data from $X$ and outputing more meaningful information to $Y$, note that this process is fast. Then $\I(X;\tilde{Y})$ starts to decrease while $\I(Y;\tilde{Y})$ keeps increasing, meaning the network is "forgetting" the data it just memorized but is still improving to give more information about $Y$, note that this process is slow. The first part is named the fitting phase and the second part is named the compression phase.

\citet{michael2018on} started a similar line of work measuring the dynamics of the mutual information. They proposed the following:

1. The compression phase is not explicit if the network uses ReLU activation function instead throughout the network.

2. The compression phase is happening alongside with the fitting phase if the input data is Gaussian-like and the labels are assigned randomly.

3. It is theoretically implausible to measure the information in the intermediate layer, given that mutual information between continuous random variables is in general ill-defined.

\subsection{Our contribution}

\subsubsection{Decomposition of Deep Network}

We propose the following conjecture of the machinary of deep neural network(see Figure~\ref{fig:network}): the data is transformed into linearly seperable feature by the nonlinear ReLU network, which is "almost invertible" as to be explained in detail below. The following full connected layer and sigmoid/softmax function together behave like a multiclass SVM, giving the best prediction among all available classes.

\begin{figure}[ht]
\centerline{\includegraphics[width=4in]{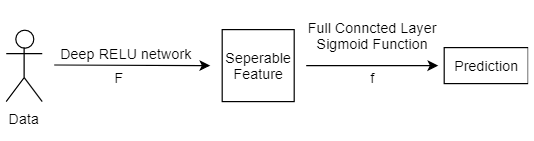}}
\caption{Our proposed decomposition of Deep Neural Network}
\label{fig:network}
\vskip 0.1in
\end{figure}

\subsubsection{Interpretation of Information Theory}

Given this structure, we can give interpretation to the experimental results by \citet{shwartz2017opening}. The fitting phase corresponds to finding parameters from ReLU network to organize the features in last layer into a linearly seperable fashion, and find full connected parameters represent hyperplanes to seperate the features. The compression phase corresponds to finding parameters maximizing the margin of the linear seperation, which is essentially driven by the randomness of the SGD algorithm. This progresses slowly as we will prove in Section~\ref{proof:SGD}. Note that the procedure of the margin maximization is to find the "support feature" and "forgetting" other features, enabled by the sigmoid/softmax function. It follows that mutual information between input $X$ and prediction $\tilde{Y}$ is diminising during this phase.

The phenomena propose by \citet{michael2018on} can therefore be explained:

1. If the activation function used in last layer is replaced by ReLU, then the SVM structure will be destroyed. The network is not "forgetting" approximately half of the features, so the compression phase will not be explicit.

2. The Gaussian input is isotropic so no network can transform this data into a linearly seperable one. We argue that compression phase only exists if there's a margin between data from different classes.

3. We agree that it's often dangerous to define mutual information between continuous valued random variables. In general, if the joint probability of two continuous random variable is not degenerate over an open neighborhood, then there exists an invertible mapping, which needs infinite amount of information to describe the mutual relationship. In fact, we do believe that the neural network will have better performance if the intermediate layers can fully preserve the information of $X$. In this case ReLU does a better job than sigmoid or tanh functions do, which matches the experimental reports by \citet{Nair:2010:RLU:3104322.3104425}. But the last layer should be fully connected and activated by sigmoid function, as indicated by \citet{NIPS2012_4824}.

However, the mutual information between continuous valued random variable and discrete valued random variable is well defined in a sense that the discretized measurement converges(see Appendix~\ref{ctsinfo}). So we argue that the expirical measurement by \citet{shwartz2017opening} on the last layer remains theoretical valid, which supports our hypothesis. Throughout our theoretical analysis below, both $Y$, $\tilde{Y}$ represent a discretized prediction random variable when comparing with $X$.

\subsubsection{Interpretation of Res-Net}

Residual Network by \citet{DBLP:journals/corr/HeZRS15}, the winner of ILSVRC2015, is one of the best existing deep network structures. Here we explain why it is so successful from our theory.

It's well known that a network that is too deep is not working very well. One of the reason could be, from our perspective, there's too much information loss from the first part, $F$, of our proposed structure (Figure~\ref{fig:network}). Res-Net is designed to allow the model to "learn" identity map easily. Specifically, in \citet{PhysRevE.69.066138}, they mentioned in appendix that mutual information is fully preserved under homeomorphisms (smooth and uniquely invertible maps). In Res-Net, the building block with input vector $x$ and output vector $y$ is related as:
\begin{equation} \label{eq:buildblock}
y = \mathcal{L}(x)+x=(\mathcal{L}+I)(x),
\end{equation}
where the operator $\mathcal{L}$ could be a composition of activation functions, convolution, drop-out(\citet{JMLR:v15:srivastava14a}) and batch normalization(\citet{DBLP:journals/corr/IoffeS15}). See Section \ref{Kaiming} for more details.

It can be shown that (see Appendix~\ref{operator} for a proof) if the operator norm $|\mathcal{L}|<1$, then $\mathcal{L}+I$ is theoretically guaranteed to have an inverse, which enables information preservation between intermediate layers. In Section~\ref{blocknorm} we experimentally verified that $|\mathcal{L}|<1$ for all intermediate layers.

The main strength of Res-Net, in our understanding, is it allows the deep mapping $F$ in Figure~\ref{fig:network} to be invertible, independent of $\mathcal{L}$, which is almost never invertible due to the use of ReLU, convolution, drop-out and other singular operations.

In the work by \citet{jacobsen:hal-01712808}, they built an deep invertible networks and showed that a network could be successful without losing any information in the intermediate layers.

\subsection{Related work}
\citet{zheng2018understanding} understood DNN from a Maximum entropy perspective. \citet{DBLP:journals/corr/ChaudhariCSL16} used entropy to detect wide valleys for SGD algorithm. \citet{SHAMIR20102696} proved generalization property of IB framework. \citet{journals/corr/AchilleS16} investigated the amount of information loss through various of operations in deep network. \citet{article} proved an upper bound on the number of nodes needed for a two-layer network to seperate the data.  \citet{DBLP:journals/corr/AlemiFD016} designed a variational approximation to the IB framework.

\section{Proofs}

In this section we prove our conjecture in Figure~\ref{fig:network} as follows:

\begin{figure}[ht]
\centerline{\includegraphics[width=4in]{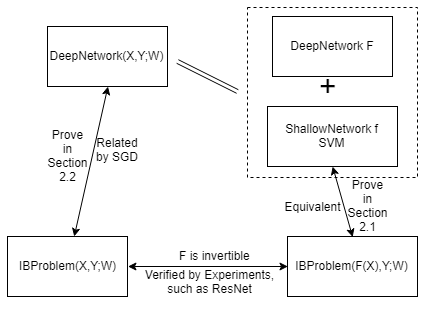}}
\caption{Chart of logic for our proof.}
\label{fig:logic}
\vskip 0.1in
\end{figure}

1. We prove that it takes $\I(Y; \tilde{Y})$ a time with polynomial order to grow during the fitting phase, and a time with exponential order to grow during the compression phase, which matches the video of the information plane dynamics by \citet{shwartz2017opening}.

2. We assume that $F$ in Figure~\ref{fig:network} is almost invertible in the sense that $\I(F(X);Y) \approx \I(X;Y)$.

We argue that the Deep Network $\mathcal{F}$, treated as an abstract function describing relationship between $\tilde{Y}$ and $(X,Y)$, is governed, if the word "parametrized" is not proper, by the quantities $\I(X; \tilde{Y})$ and $\I(Y; \tilde{Y})$.

From an information theoretic set up, the IB problem can be considered as:
\begin{equation}
\I(X;\tilde{Y})-\alpha \I(Y; \tilde{Y})  =  \I(F(X);\tilde{Y})-\alpha \I(Y; \tilde{Y}),
\end{equation}
where $\alpha$ is some positive constant.
Therefore we can abuse the notation between $X$ and $F(X)$ as they are equivalent in the information theoretic setting. In particular, we have $\tilde{Y}=f(X)$ where $f$ is defined in Figure~\ref{fig:network}.

Then we prove there's a direct relationship between IB problem and SVM problem in linear case.

We conclude that deep neural network, from a information theoretic point of view, can be reduced to a hard-margin SVM problem.

3. We discuss the notion of generalization in our context.

\subsection{Machinary of SGD}
\label{proof:SGD}

There is a history of work towarding the behaviour of discrete SGD. In particular, \citet{roberts1996} shows that the discrete Langevin difussion converges to the target distribution exponentially fast in time. This is a general result applying to Markov Chain Monte Carlo(MCMC) and is not practical because SGD is a "path-finder" with a diminishing step length. \citet{pmlr-v65-raginsky17a} and \citet{DBLP:journals/corr/abs-1802-06439} showed that the searching path of SGD needs time of exponential order to jump from one local min to another local min, which is intuitively why the compression phase is slow. \citet{Zhang2017AHT} showed that the SGD path needs time of polynomial order to enter the first "good" local min, which is intuitively why the fitting phase is fast.

In our work, we present a similar result from an information theoretic perspective, by using an analogy to standard stochastic convex analysis in \citet{Bottou:1999:OLS:304710.304720}.

Recall this video from \citet{shwartz2017opening} on \textit{information plane} at \textit{https://goo.gl/rygyIT}.

\begin{claim}
The general form of SGD is given by:
\begin{equation}
w_{t+1} = w_t -a_t\nabla F(w_t)-b_tB,
\end{equation}
where $a_t,b_t$ are positive constant varying with time $t$ and $B$ is the standard Gaussian $\mathcal{N}(0,\I)$.

In particular, consider a Langevin Monte Carlo(LMC) setting where $a_t=\frac{1}{t}$ and $b_t=\frac{1}{\sqrt{t}}$.

Then $\I(Y;\tilde{Y})$ reaches local maximum in time with polynomial order, and hits higher local maximum in time with exponential order.
\end{claim}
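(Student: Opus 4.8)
The first step is to set up a dictionary between the optimization objective $F$ and the information quantity $\I(Y;\tilde{Y})$. Writing $\tilde{Y}=f(X)$ with $f$ parametrized by $w$ as in Figure~\ref{fig:network}, the cross-entropy loss that SGD minimizes is, up to the fixed constant $H(Y)$, an estimator of $H(Y\mid\tilde{Y})=H(Y)-\I(Y;\tilde{Y})$; hence $F(w)=-\I(Y;\tilde{Y}_w)+H(Y)+o(1)$, so the local minima of $F$ coincide with the local maxima of $w\mapsto\I(Y;\tilde{Y}_w)$. It therefore suffices to show that (i) SGD reaches a local minimum of $F$ in a number of steps polynomial in the problem parameters, and (ii) from there it needs exponentially many steps to reach a strictly lower local minimum. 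I would also first justify the stated general form: the minibatch gradient equals $\nabla F(w_t)+\xi_t$ with $\E[\xi_t]=0$, a CLT over the minibatch lets one model $\xi_t\approx\mathcal{N}(0,\Sigma_t)$, and absorbing the learning rate together with the isotropic approximation $\Sigma_t\approx b_t^2\,\I$ (magnitude decaying as the effective step is annealed) gives the claimed recursion; the choice $b_t=\sqrt{a_t}$ with $a_t=1/t$ is exactly the Euler--Maruyama discretization of overdamped Langevin at fixed temperature, so I will write the update as $w_{t+1}=w_t-a_t\nabla F(w_t)-\sqrt{a_t}\,B_t$ with $B_t$ i.i.d.\ standard Gaussian.

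For (i), the fitting phase, I would work inside the basin of the target local minimum $w^\star$, where $F$ is strongly convex, $\nabla^2 F\succeq\mu\,\I$, with Lipschitz gradient (the ``stochastic convex analysis'' of \citet{Bottou:1999:OLS:304710.304720}). Setting $u_t=\E\|w_t-w^\star\|^2$ and expanding the update gives a Lyapunov recursion $u_{t+1}\le(1-2\mu a_t+L^2 a_t^2)\,u_t+a_t^2 G^2+b_t^2 d$; with $a_t=1/t$ and $b_t^2=1/t$ the multiplier is $1-2\mu/t+O(1/t^2)$ and the additive term is $O(d/t)$, so the standard recursion lemma yields $u_t=O\!\big(d/(\mu t)\big)$ (with an extra $\log t$ in the borderline case $2\mu=1$). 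Thus the iterate, and with it $\I(Y;\tilde{Y})$, is within $\varepsilon$ of the local optimum after $t=O\!\big(d/(\mu\varepsilon)\big)$ steps, polynomial order; this is the hitting-time statement of \citet{Zhang2017AHT} adapted to our annealed schedule. The macroscopic initial descent into the basin, where the decreasing $a_t=1/t$ contributes a cumulative displacement only of order $\log t$, I would handle under a one-point / PL-type convexity assumption on the overparametrized landscape, or simply absorb it by starting the count once the iterate has entered the basin.

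For (ii), the compression phase, the iterate must leave the basin of $w^\star$ over an energy barrier $\Delta=F(\text{saddle})-F(w^\star)=\Theta(1)$ before it can descend into a lower basin. Under the time change $\tau=\sum_{s\le t}a_s\asymp\log t$, the scheme $w_{t+1}=w_t-a_t\nabla F(w_t)-\sqrt{a_t}\,B_t$ is the Euler discretization of the constant-temperature Langevin SDE $dW=-\nabla F(W)\,d\tau+\sqrt{2/\beta}\,dW_\tau$, with $\beta$ fixed by the ratio $b_t^2/a_t$; Freidlin--Wentzell large deviations, equivalently the Eyring--Kramers escape formula used by \citet{pmlr-v65-raginsky17a} and \citet{DBLP:journals/corr/abs-1802-06439}, then give an expected exit time of order $e^{\beta\Delta}$ in $\tau$-time, hence a number of SGD iterations exponential in the barrier height. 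So $\I(Y;\tilde{Y})$ only reaches a strictly higher local maximum after exponentially many steps, which is the slow compression phase seen in the information-plane video of \citet{shwartz2017opening}.

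The step I expect to be the main obstacle is making (ii) rigorous for the time-inhomogeneous discrete chain rather than the idealized constant-coefficient diffusion: one must control the accumulated Euler discretization error over the $\sim e^{\beta\Delta}$ iterations needed to escape, so that the discrete process genuinely sees the same barrier as the continuous one, and one must reconcile the two notions of ``time'' — iteration count $t$ versus diffusion time $\tau\asymp\log t$ — while preserving the exponential-versus-polynomial dichotomy. A secondary technical point is that $F\approx-\I(Y;\tilde{Y})+\mathrm{const}$ holds only for the empirical, discretized information and only up to sampling error, so the correspondence of critical points should really be stated for the population risk, with the fluctuations folded into the Langevin noise $b_tB$.
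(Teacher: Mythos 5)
Your plan follows essentially the same route as the paper's own proof: identify $\I(Y;\tilde{Y})$ (up to constants) with the negative cross-entropy loss $F$ --- the paper does this through the high-probability lower bound of Appendix~B rather than the population identity $\I(Y;\tilde{Y})=H(Y)-H(Y\mid\tilde{Y})$, but the role is identical --- then run the one-step expansion of $\E\|w_t-w^*\|^2$ under local $m$-strong convexity and $M$-smoothness with $a_t=1/t$, $b_t=1/\sqrt{t}$, and finally appeal to Langevin escape-time results (Raginsky et al., Tzen et al.) for the exponential phase. Your treatment of the escape phase is in fact more ambitious than the paper's, which only observes that the polynomial-regime recursion becomes invalid once the accumulated noise term is non-negligible and asserts that a jump to a better minimum can then occur.

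One step of your fitting-phase argument fails as stated: with $b_t^2=1/t$ the injected noise per iteration is $\Theta(d/t)$, not $\Theta(d/t^2)$, so the recursion $u_{t+1}\le(1-2\mu/t+O(1/t^2))u_t+O(d/t)$ does not give $u_t=O(d/(\mu t))$; the standard recursion lemma only makes the initial-condition term decay polynomially, while the noise contribution settles at a constant floor of order $d/\mu$ (the paper's cruder unrolling even records it as $O(\log t)$). This is exactly the constant-temperature behaviour one should expect here, since $b_t^2/a_t\equiv 1$: the iterate reaches a Gibbs-type neighbourhood of $w^*$ polynomially fast but does not converge to $w^*$, so your statement ``within $\varepsilon$ after $O(d/(\mu\varepsilon))$ steps'' must be weakened to ``within the temperature-determined noise floor,'' which is all the claim (and the paper) actually uses. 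On the escape phase, note also that the elapsed diffusion time is $\tau_t=\sum_{s\le t}a_s\asymp\log t$, so an Eyring--Kramers exit time of order $e^{\beta\Delta}$ in $\tau$ translates into an iteration count of order $\exp(e^{\beta\Delta})$; this still yields the claimed ``exponential (indeed longer) order'' in $t$, but your phrase ``iterations exponential in the barrier height'' needs that correction --- you flagged the time-change reconciliation as the main obstacle yourself, and the paper does not attempt it at all.
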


\begin{proof}

From Appendix~\ref{proof(1)}, we have:
\begin{equation}
\I(X; \tilde{Y})  \leq 2(\Var(p(\tilde{y})) + \Var(p(\tilde{y}|x))).
\end{equation}

From Appendix~\ref{proof(2)}, we have:
\begin{equation} \label{eq.bound(3)}
\I(Y;\tilde{Y}) \geq A + \log(\prod_{i=1}^n f(y_i|x_i;w_t)) + \sum_{i=1}^n \log(p(y_i|x_i)).
\end{equation}

Now I focus on showing this lower bound (\ref{eq.bound(3)}) is first increasing polynomially fast, then logarithmically slow, driven by the machinary of SGD.

Denote $F(w_t) = -\sum_{i=1}^n \log(f(y_i|x_i,w_t))$ and consider the SGD mechanics in general form as follows:
\begin{equation}
w_{t+1} = w_t -a_t\nabla F(w_t)-b_tB,
\end{equation}
where $a_t,b_t$ are positive constant varying with time $t$ and $B$ is standard Gaussian $\mathcal{N}(0,\I)$.

Denote a discrete set of local minimas and saddle points of $F$ as $\mathcal{C}$.

Define the metric $\rho$ as:
\begin{equation}
\rho(w, \mathcal{C}) = \inf\{||w-c||_2: c \in \mathcal{C}\}.
\end{equation}

Assumption:

(1) $F$ is $M$ smooth with respect to $\mathcal{C}$:

Let $c_x=\arginf_c\{||x-c||_2: c \in \mathcal{C}\}$, then $||\nabla F(x)-\nabla F(x)||_2 \leq M||x-y||_2$, for all $x,y \in W$ satisfying $c_x = c_y$.

(2) $F$ is $m$ strongly convex with respect to $\mathcal{C}$:

Let $c_x=\arginf_c\{||x-c||_2: c \in \mathcal{C}\}$, then $\langle \nabla F(x)-\nabla F(x), x-y \rangle \geq m||x-y||_2^2$, for all $x,y \in W$ satisfying $c_x = c_y$.

Now suppose $w_1$ is closed to $w^* \in \mathcal{C}$, where $w^*$ is more than O(t) away from other element in $\mathcal{C}$, then if we consider the training process restricted to a polynomial time regime, we have $\rho(w_t, \mathcal{C}) = ||w_t - w^*||$, and we have the following:
\begin{equation}
\begin{array} {lcl}
||w_{t+1}-w^*||^2 & = & ||w_t -a_t\nabla F(w_t)-b_tB - w^*||^2\\
& = & ||w_t-w^*||^2 -2\langle w_t-w^*, a_t\nabla F(w_t)+b_tB \rangle+||a_t\nabla F(w_t)+b_tB||^2
\end{array}
\end{equation}

Taking expectation with respect to filtration $\mathcal{F}_t$ at time $t$ gives:
\begin{equation}
\begin{array} {lcl}
\E(||w_{t+1}-w^*||^2|\mathcal{F}_t) & = & ||w_t-w^*||^2 -2a_t\langle w_t-w^*, \nabla F(w_t) \rangle+a_t^2||\nabla F(w_t)||^2+b_t^2\\
& \leq & ||w_t-w^*||^2 -2a_tm||w_t-w^*||^2+a_t^2M^2||w_t-w^*||^2+b_t^2\\
& = & (1-2a_tm+a_t^2M)||w_t-w^*||^2+b_t^2
\end{array}
\end{equation}

Now consider a Langevin Monte Carlo(LMC) setting where $a_t=\frac{1}{t}$ and $b_t=\frac{1}{\sqrt{t}}$. Then (7) becomes:
\begin{equation} \label{eq:scale}
\begin{array} {lcl}
\E(||w_{t+1}-w^*||^2|\mathcal{F}_t) & = & (1-2\frac{m}{t}+\frac{M^2}{t^2})||w_t-w^*||^2+\frac{1}{t}\\
\E(||w_{t+1}-w^*||^2)& = & (1-2\frac{m}{t}+\frac{M^2}{t^2})\E(||w_t-w^*||^2)+\frac{1}{t}\\
\E(||w_{t+1}-w^*||^2)& \leq & \prod_{s=1}^t(1-2\frac{m}{s}+\frac{M^2}{s^2})\E(||w_1-w^*||^2)+\sum_{s=1}^t \frac{1}{s}\\
& = & \prod_{s=1}^t(1-2\frac{m}{s}+\frac{M^2}{s^2})\E(||w_1-w^*||^2)+O(\log(t)+1)
\end{array}
\end{equation}

According to L'Hopital's rule:
\begin{equation}
\begin{array} {lcl}
\lim_{s \rightarrow \infty}\frac{\log(1-2\frac{m}{s}+\frac{M^2}{s^2})}{\frac{1}{s}} & = & \lim_{s \rightarrow \infty}\frac{\frac{2s^{-2}m-2M^2s^{-3}}{1-2\frac{m}{s}+\frac{M^2}{s^2}}}{-s^{-2}}\\
& = & -2m.
\end{array}
\end{equation}
So $\log(1-2\frac{m}{s}+\frac{M^2}{s^2})$ is comparable with $\frac{1}{s}$, and $\sum_{s=1}^t \log(1-2\frac{m}{s}+\frac{M^2}{s^2})$ is comparable with $-\log(t)$ and therefore $\prod_{s=1}^t(1-2\frac{m}{s}+\frac{M^2}{s^2})$ is of order $O(\frac{1}{t})$.

So our conclusion is:
\begin{equation}
\E(||w_{t+1}-w^*||^2) = O(\frac{1}{t})\E(||w_1-w^*||^2)+O(\log(t))
\end{equation}

This shows under a polynomial time regime, $w_t$ is converging to a local min of $F$ linearly fast.

But if we consider the exponential time regime, the above analysis is invalid because the last term of (\ref{eq:scale}) is innegligible. There's a possibility that $w_t$ may jump to another $w^{**} \in \mathcal{C}$.

Here we conclude that the lower bound (\ref{eq.bound(3)}) of the quantity of interest converges to a local min in polynomial time, then switch higher to "better" local max in exponential time, which explains the video of Tishby's.
\end{proof}

\subsection{Relationship between IB and SVM}

For simplicity we prove the result for binary classification.

\begin{claim}
 Consider a binary classification problem where the data set $\mathcal{D}=(x_i,y_i)$ are linearly seperable. $p(\tilde{y}=1|x,w)$ is modeled by $f(x,w)=\sigma(w^tx)$ and $p(\tilde{y}=-1|x,w)=1-f(x,w)$. Recall that $\sigma(t)=\frac{1}{1+e^{-t}}$.

 Given a IB problem:
\begin{equation}
Minimize \textrm{\space\space} \I(X;\tilde{Y})-\alpha \I(Y; \tilde{Y})
\end{equation}
it can be formulated as hard margin SVM problem.
\end{claim}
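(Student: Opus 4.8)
The plan is to reduce the information‑bottleneck Lagrangian to a penalized logistic regression and then invoke the separability hypothesis to recover the hard‑margin program. First I would use the two estimates already recorded in the paper. The lower bound (\ref{eq.bound(3)}) shows that, up to additive constants that do not depend on $w$, maximizing $\I(Y;\tilde{Y})$ is the same as maximizing the conditional log‑likelihood $\sum_{i=1}^n \log f(y_i\mid x_i;w) = \sum_{i=1}^n \log \sigma(y_i w^t x_i)$, i.e.\ minimizing the logistic loss $L(w) = \sum_{i=1}^n \log(1+e^{-y_i w^t x_i})$. The upper bound $\I(X;\tilde{Y}) \le 2(\Var(p(\tilde{y})) + \Var(p(\tilde{y}\mid x)))$ I would use to argue that the encoder term behaves like a monotone complexity penalty: it is small exactly when the per‑point predictions $f(x_i,w)$ are nearly constant in $i$ or nearly deterministic, and (after a monotone reparametrization) its sublevel sets are balls in $w$. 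Thus on the relevant region the IB objective $\I(X;\tilde{Y}) - \alpha\,\I(Y;\tilde{Y})$ reads $R(||w||) + \alpha L(w)$ with $R$ increasing, which is an $\ell_2$‑regularized logistic regression.

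Next I would exploit linear separability. Write $w = \lambda v$ with $||v|| = 1$, and let $\gamma(v) = \min_i y_i v^t x_i$ be the margin of the direction $v$; separability means $\gamma^\star := \max_{||v||=1}\gamma(v) > 0$, attained at the normalized hard‑margin SVM direction $v^\star$. For fixed $v$ with $\gamma(v)>0$ one has $L(\lambda v) = \sum_i \log(1+e^{-\lambda y_i v^t x_i}) \asymp e^{-\lambda \gamma(v)}$ as $\lambda\to\infty$, so $\I(Y;\tilde{Y})$ approaches its ceiling $H(Y)$ at the exponential rate $e^{-\lambda\gamma(v)}$ while $\I(X;\tilde{Y})$ increases with $\lambda$. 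Minimizing $R(\lambda) + \alpha L(\lambda v)$ therefore forces $\lambda\to\infty$ (the likelihood term keeps paying off), but for any prescribed tolerance on the relevance term it does so with the \emph{smallest} $\lambda$, hence the smallest penalty $R(||w||)$, precisely when $v$ maximizes $\gamma(v)$, i.e.\ $v=v^\star$. Passing to the constrained form of the Lagrangian, ``minimize $\I(X;\tilde{Y})$ subject to $\I(Y;\tilde{Y}) \ge H(Y)-\epsilon$'' becomes, after these substitutions and in the limit $\epsilon\downarrow 0$, ``minimize $||w||$ subject to $y_i w^t x_i \ge 1$ for all $i$'' — the hard‑margin SVM. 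This is the content of the claim.

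The main obstacle is the degeneracy of the exact limit: as $\lambda\to\infty$ both $\I(X;\tilde{Y})$ and $\I(Y;\tilde{Y})$ collapse onto $H(Y)$, so the SVM direction is singled out only by a second‑order (rate) comparison. One must show that $\I(X;\tilde{Y})$, restricted to the cone of near‑separating $w$, is genuinely an increasing function of $||w||$ up to lower‑order terms; the crude variance bound gives the right qualitative behaviour but not obviously a clean monotone surrogate, so a direct expansion of $\I(X;\tilde{Y}) = H(\E_X f) - \E_X\,h_b(f)$ (with $h_b$ the binary entropy) in powers of the point‑wise errors $e^{-\lambda y_i v^t x_i}$ is probably needed to pin the leading term to the minimal‑margin points. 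A secondary nuisance is that $\tilde{Y}$ is the $30$‑bin discretization and the $x$‑averages are empirical, so the identities above hold only up to the discretization error discussed in Appendix~\ref{ctsinfo}; I would prove the reduction for the idealized (exact) quantities and note that it persists up to that error.
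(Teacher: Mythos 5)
Your proposal rests on the same two pillars as the paper's proof---the variance upper bound on $\I(X;\tilde{Y})$ from Appendix~\ref{proof(1)} and the log-likelihood lower bound (\ref{eq.bound(3)}) on $\I(Y;\tilde{Y})$ from Appendix~\ref{proof(2)}---but your end-game is genuinely different. The paper makes the encoder bound concrete by an explicit computation: it evaluates $\Var(p(\tilde{y}|x))$ through the cubic moments of $\sigma(w^tx)$ and a first-order Taylor bound $|\sigma(t)-\frac{1}{2}|\leq\frac{1}{4}|t|$, arriving at $\I(X;\tilde{Y})\leq A+B\|w\|^2$; it lower-bounds the decoder term by $\sum_i \log(|\sigma(w^tx_i)-\frac{1}{2}|+\frac{1}{2})$ up to constants; and it then simply identifies the combined surrogate $A'+B\|w\|^2-\alpha\sum_i\log(|\sigma(w^tx_i)-\frac{1}{2}|+\frac{1}{2})$ as ``a Lagrangian form of an optimal margin classifier'' and stops. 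You instead read the two bounds as $\ell_2$-regularized logistic regression and extract the hard-margin program from the separable limit (directional margin $\gamma(v)$, constrained form with $\epsilon\downarrow 0$), in the spirit of the regularization-path/implicit-bias results of \citet{soudry2018the}. Your route supplies an actual mechanism that singles out the max-margin direction, which the paper's one-line identification does not; conversely, the monotone surrogate in $\|w\|$ that you flag as your main obstacle is exactly what the paper's Taylor computation delivers, so that worry can be discharged by the calculation already in the proof.

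Two concrete corrections. First, your claim that minimizing $R(\lambda)+\alpha L(\lambda v)$ for fixed $\alpha$ ``forces $\lambda\to\infty$'' is false: with $R$ growing like $B\lambda^2$ and $L(\lambda v)\asymp e^{-\lambda\gamma(v)}$, the penalty eventually dominates the exponentially small likelihood gain and the minimizer has finite norm. The hard-margin solution emerges only as $\alpha\to\infty$ (equivalently, vanishing regularization) or in your constrained reformulation with $\epsilon\downarrow 0$, so the argument should be routed entirely through that limit rather than through the fixed-$\alpha$ Lagrangian. Second, like the paper, you substitute an upper bound for $\I(X;\tilde{Y})$ and a lower bound for $\I(Y;\tilde{Y})$, so what is actually minimized is an upper bound on the IB objective; neither your sketch nor the paper shows that the true IB minimizer coincides with the SVM solution, and the reduction should be stated as holding at the level of this surrogate.
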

\begin{proof}

From Appendix~\ref{proof(1)} we have that:
\begin{equation} \label{control}
\I(X;\tilde{Y}) \leq  2(\Var(p(\tilde{y})) + \Var(p(\tilde{y}|x)))
\end{equation}
Here we make an assumption that for the models we trained over time, it's output $\tilde{Y}$ is approximately uniform distributed over the finite labels. So we treat $p(\tilde{y_i})$ as constant for all $i$. In particular, we assume: $p(\tilde{y}=1)\approx p(\tilde{y}=-1)\approx \frac{1}{2}$, then the first term of (\ref{control}) on RHS can be controlled:
\begin{equation} \label{uniform}
\Var(p(\tilde{y})) \leq D,
\end{equation}
for some constant $D$.

We can bound the second term of (\ref{control}) on RHS by:
\begin{equation} \label{eq:dev}
\begin{array} {lcl}
\Var(p(\tilde{y}|x)) & \leq & \int_{\tilde{Y}} \int_{X}p(x,\tilde{y}) p(\tilde{y}|x)^2dxd\tilde{y}\\
& = & \int_{X}p(\tilde{y}=1|x)^3p(x)dx + \int_{X}p(\tilde{y}=-1|x)^3p(x)dx \\
& = & \int_{X}(\sigma(w^tx))^3p(x)dx + \int_{X}(1-\sigma(w^tx))^3p(x)dx \\
& = & \int_{X}(1-3\sigma(w^tx)+3\sigma(w^tx)^2)p(x)dx \\
& = & \frac{1}{4} + 3\int_{X} (\sigma(w^tx)-\frac{1}{2})^2 p(x)dx\\
\end{array}
\end{equation}

Consider the 1st order Taylor expansion:
\begin{equation}
\sigma(w^tx) = \sigma(0) + \sigma'(c)(w^tx) \leq \frac{1}{2}+\frac{1}{4}w^tx.
\end{equation}

Substitute it into (\ref{eq:dev}) to get:
\begin{equation}
\begin{array} {lcl}
\Var(p(\tilde{y}|x)) & \leq & \frac{1}{4} + 3\int_{X} (\sigma(w^tx)-\frac{1}{2})^2 p(x)dx\\
& = & \frac{1}{4}+\frac{3}{16}\int_{X} (w^tx)^2 p(x)dx \\
& = & \frac{1}{4}+(\frac{3}{16}\int_{X} x^2 p(x)dx)||w||^2
\end{array}
\end{equation}

To conclude, we have a bound of (\ref{control}) the form:
\begin{equation} \label{eq:bound1}
\I(X;\tilde{Y}) \leq  A+B||w||^2.
\end{equation}

From Appendix~\ref{proof(2)}, we approximate the Mutual information $\I(Y;\tilde{Y})$ by:
\begin{equation}
\begin{array} {lcl}
\I(Y;\tilde{Y}) &=& \sum_i \log(\frac{p(y_i, \tilde{y_i})}{p(y_i)p(\tilde{y_i})})\\
&=& \sum_{i=1}^n \log(\frac{\sum_{j=1}^n p(y_i,\tilde{y_i}|x_j)}{p(y_i)p(\tilde{y_i})})\\
&=& \sum_{i=1}^n \log(\frac{\sum_{j=1}^n p(y_i|\tilde{y_i},x_j)p(\tilde{y_i}|x_j)}{p(y_i)p(\tilde{y_i})})\\
 &=& \sum_{i=1}^n \log(\frac{\sum_{j=1}^n p(y_i|x_j)p(\tilde{y_i}|x_j)}{p(y_i)p(\tilde{y_i})}).
\end{array}
\end{equation}
with high probability.

Also note that $p(\tilde{y_i}|x_j)$ is given by the model $f(\tilde{y_i}|x_j;w)$.

where the prediction $\tilde{y_i}$ satisfies:
\begin{equation}
p(\tilde{y_i}=1|x_i) = f(x_i,w) = \sigma(w^tx_i).
\end{equation}

So $\I(Y;\tilde{Y})$ is now of the form:
\begin{equation} \label{eq:bound2}
\begin{array} {lcl}
\I(Y;\tilde{Y}) & \geq & A + \sum_{i=1}^n \log(\sum_{j=1}^n p(y_i|x_j)f(\tilde{y_i}|x_j;w))\\
& \geq & A + \sum_{i=1}^n \log(p(y_i|x_i)f(\tilde{y_i}|x_i;w))\\
& \geq & A + \sum_{i=1}^n \log(f(\tilde{y_i}|x_i;w)) + \sum_{i=1}^n \log(p(y_i|x_i))\\
& \geq & A + \sum_{i=1}^n \log(|\sigma(w^tx_i)-\frac{1}{2}|+\frac{1}{2}) + \sum_{i=1}^n \log(p(y_i|x_i))
\end{array}
\end{equation}
with high probability for some constant $A$.

Finnally we put (\ref{eq:bound1})\&(\ref{eq:bound2}) together:
\begin{equation}
\I(X;\tilde{Y})-\alpha \I(Y; \tilde{Y})  \leq  A'+B||w||^2-\alpha \sum_i \log(|\sigma(w^tx_i)-\frac{1}{2}|+\frac{1}{2}).
\end{equation}
This is a Lagrangian form of an optimal margin classifier.
\end{proof}

\subsection{Generalization}
Consider the target random variable $X$ with unknown distribution. We have a collection of samples of it, denoted as $\mathcal{D} = \{x_i\}_{i=1}^n$. Also denote the hypothesis class as $\mathcal{H}$.

The loss function is defined as:
\begin{equation}
l: X \times \mathcal{H} \mapsto \mathbb{R_+}
\end{equation}

And the risk:
\begin{equation}
\mathcal{R} = \E_X [l(x,f)].
\end{equation}

The optimal function is defined as:
\begin{equation}
f^* \in \argmin_{f \in \mathcal{H}} \mathcal{R}(f).
\end{equation}

Empirical risk minimization(ERM) is given as:
\begin{equation}
\hat{f} \in \argmin_{f \in \mathcal{H}} \hat{\mathcal{R}}(f),
\end{equation}
where $\hat{\mathcal{R}}(f)$ is given as:
\begin{equation}
\hat{\mathcal{R}}(f) = \frac{1}{n} \sum_{i=1}^n l(x_i, f).
\end{equation}

Consider the following decomposition:
\begin{equation}\label{PAC}
\begin{array} {lcl}
\mathcal{R}(\hat{f}) & = & \mathcal{R}(\hat{f})-\hat{\mathcal{R}}(\hat{f})+\hat{\mathcal{R}}(\hat{f})-\hat{\mathcal{R}}(f^*)+\hat{\mathcal{R}}(f^*)-\mathcal{R}(f^*)+\mathcal{R}(f^*)\\
& \leq & \sup_{f \in \mathcal{H}} |\mathcal{R}(f)-\hat{\mathcal{R}}(f)| + 0 + |\hat{\mathcal{R}}(f^*)-\mathcal{R}(f^*)|+ \mathcal{R}(f^*).
\end{array}
\end{equation}
Here in (\ref{PAC}), the second $\hat{\mathcal{R}}(\hat{f})-\hat{\mathcal{R}}(f^*)$ is bounded by zero by definition; $\hat{\mathcal{R}}(f^*)-\mathcal{R}(f^*)$ is small, guaranteed by the law of large number under the assumption that the number of samples $n$ is sufficiently large. $\mathcal{R}(f^*)$ is a constant depending on the hypothesis class $\mathcal{H}$.

The bound $\sup_{f \in \mathcal{H}} |\mathcal{R}(f)-\hat{\mathcal{R}}(f)|$ for $\mathcal{R}(\hat{f})-\hat{\mathcal{R}}(\hat{f})$ is typically controlled by the VC theory in the literature, see \citet{Vapnik:1995:NSL:211359}. But as pointed out in \citet{45820}, if the number of parameters is much larger than the number of samples, some form of regularization is needed to ensure small generalization error. \citet{Neyshabur2017ExploringGI} mentioned that a sharper bound can be obtained by making it dependent on the choice of $\hat{f}$.

We would like to provide more insights to generalization, by formally establishing relationship between generalization in deep learning and margin. In our model, the SVM in the last layer generalizes better if the margin is larger. In \citet{DBLP:journals/corr/NeyshaburBMS17}, they carefully analysized the notion of normed based control on margin with relation to generalization. In \citet{DBLP:journals/corr/BartlettFT17}, they also proved a bound on generalization with margin by using Rademacher complexity.

\section{Experiments}

In this section we interpret existing experimental reults using our theory. We also did experiment verifying features of our own interest on Res-Net.

\subsection{Deep Residual Network}
\label{Kaiming}

Here we use full reference to experimental results from \citet{DBLP:journals/corr/HeZR016}.

\renewcommand\arraystretch{1.3}
\setlength{\tabcolsep}{6pt}
\begin{table}[!htbp]
\caption{Classification error (\%) on the CIFAR-10 test set using different activation functions.}\label{tab:activations}
\centering
\fontsize{8pt}{1em}\selectfont
\begin{tabular}{l|l|c|c}
\hline
case & Fig. & ResNet-110 & ResNet-164 \\
\hline
\hline
original Residual Unit & Fig.~\ref{fig:activations}(a) & 6.61 & 5.93 \\
\hline
BN after addition & Fig.~\ref{fig:activations}(b) & 8.17 & 6.50 \\
\hline
ReLU before addition & Fig.~\ref{fig:activations}(c) & 7.84 & 6.14 \\
ReLU-only pre-activation & Fig.~\ref{fig:activations}(d) & 6.71 & 5.91 \\
\textbf{full pre-activation} & Fig.~\ref{fig:activations}(e) & \textbf{6.37} & \textbf{5.46} \\
\hline
\end{tabular}
\end{table}

\begin{figure}[!htbp]
\centering
\includegraphics[width=.99\linewidth]{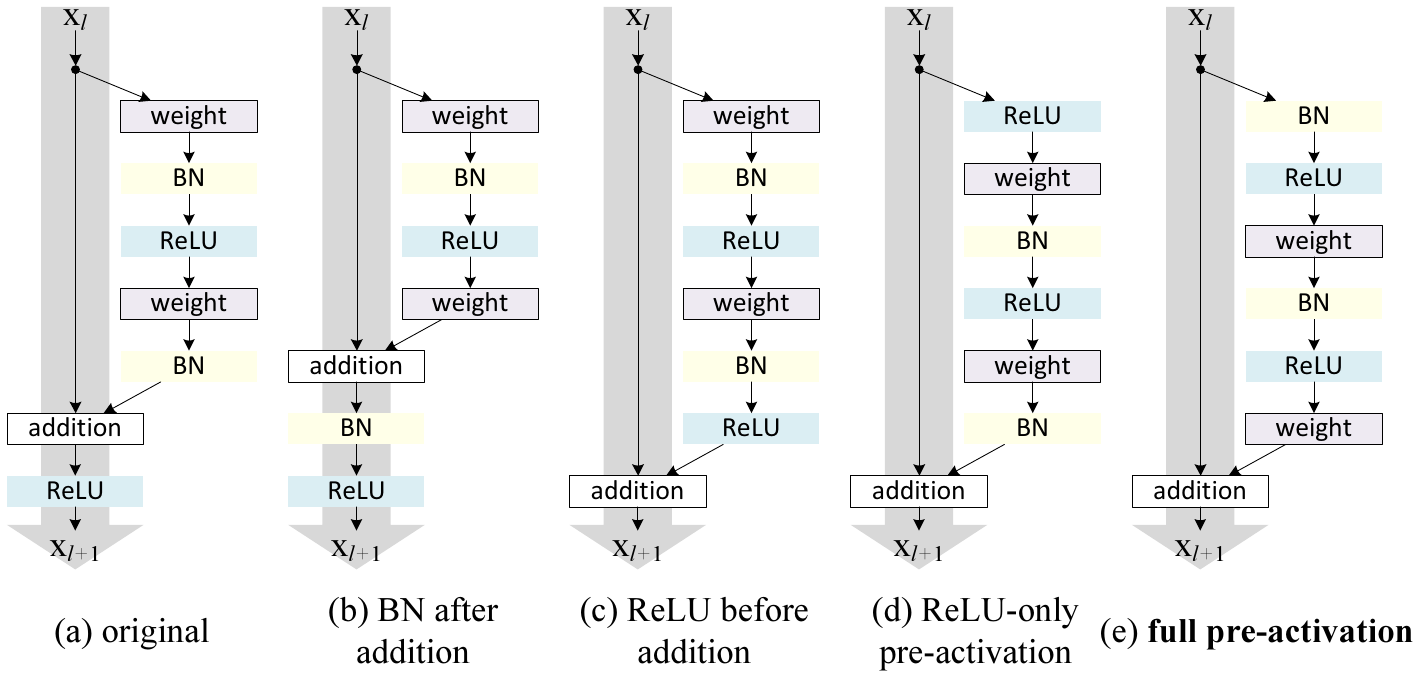}
\caption{Various usages of activation in Table~\ref{tab:activations}. All these units consist of the same components --- only the orders are different.}
\label{fig:activations}
\end{figure}
Here they compared different structures for building blocks and report the performance in Table~\ref{tab:activations}. We give an interpretation to their result according to our theory: (a)\&(b) both have ReLU after adding the identity mapping, which makes the whole mapping not invertible; (c) makes $\mathcal{L}$ defined in (\ref{eq:buildblock}) an nonnegative operator, which may potentially enlarge this operator norm out of our theoretical guarantee (Appendix~\ref{operator}); (d) performs ReLU directly to input, which loses information, applying Batch Normalization after convolution also is meaningless.

\subsection{Flat Basins}
\begin{figure}[ht]
  \centering
    \includegraphics[width=0.65\textwidth]{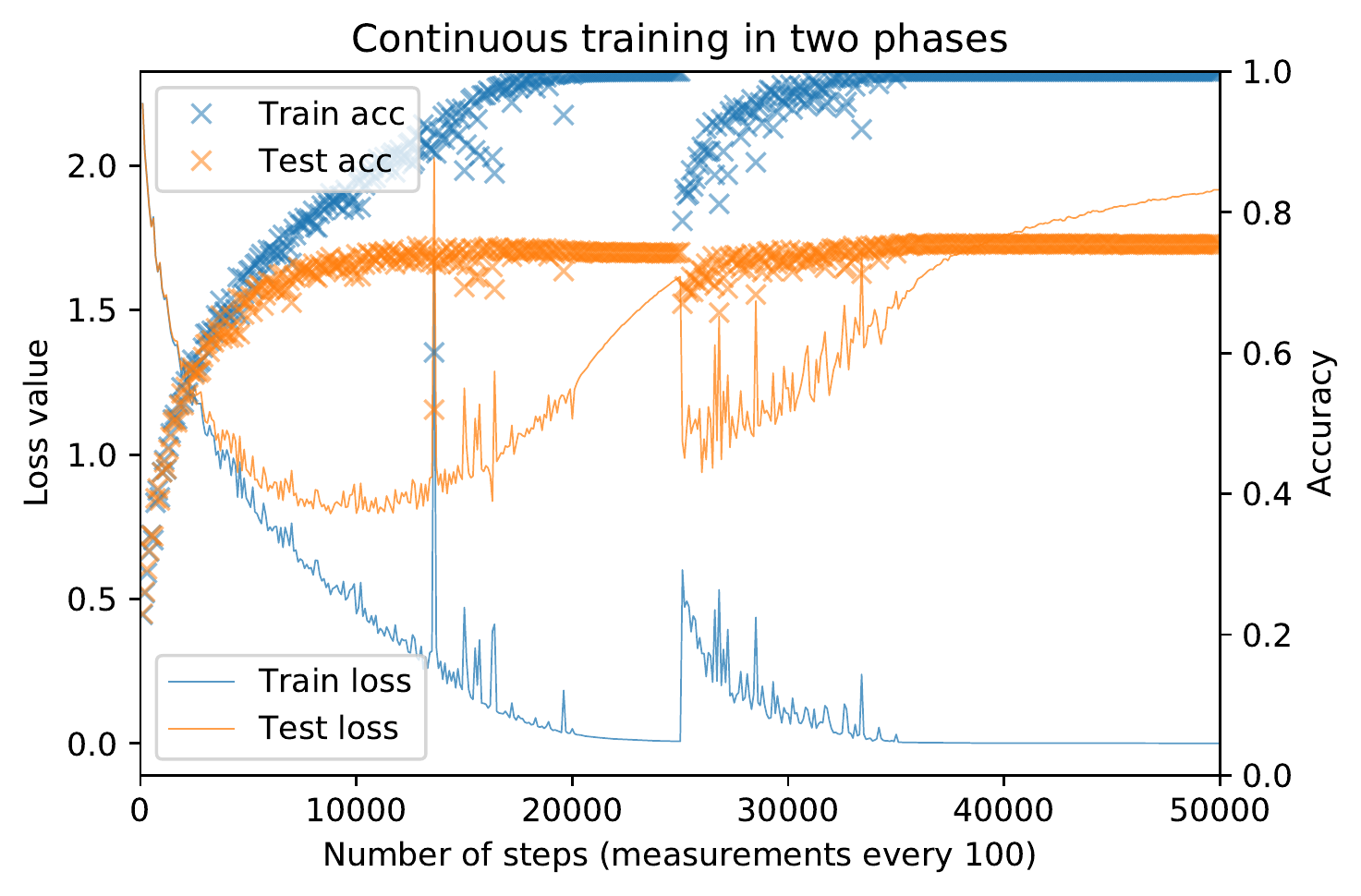}
    \caption{Large batch training immediately followed by small batch training on the full dataset of CIFAR10 with a \textit{raw} version of AlexNet.}
    \label{fig:LB_SB}
\end{figure}
Here we use full reference to experimental results from \citet{DBLP:journals/corr/SagunEGDB17}.

On generalization property of GD, \citet{3266} proved in appendix that GD converges to optimal solution to underdetermined least square problem, under proper initialization; \citet{soudry2018the} proved that GD converges to the optimal solution of a hard margin SVM slowly.

Their work motivated us to see the corresponding result in deep learning. The difference is, in our theory, the seperable data is not deterministic as it's controlled by the feature learning map $F$ in Figure~\ref{fig:network}. The network is looking for a $F$ to create a larger margin between data of different category and a "weighted" linear seperator $f$ to achieve that maximum margin.

Back to their experiment, they trained a neural network with batch gradient descent(GD) for the first 2.5k steps and switch to stochastic gradient descent(SGD) for the rest 2.5k steps. Notice there's a significant jump at step 2.5k when the optimization algorithm is changed.

They argue that despite the jump, there exists a linear interpolation between LHS and RHS so GD and SGD lead to essentially the same "basin". As pointed out by \citet{DBLP:journals/corr/DauphinPGCGB14}, in a very high dimensional problem, it's very hard to encounter a strict local minima: almost all critical points are saddle points as there could always exist some direction that is not "going up" in the landscape. So we do suspect that almost all "basins" are connected together in some sense. In the work by \citet{DBLP:journals/corr/DinhPBB17}, they argued that a universal definition of flatness of the error surface is still unclear. In particular, they proved that the geometry of a local minimum can be changed arbitrarily by reparametrization, without changing what function it represents.

Instead concerning about the notion of flatness, we can talk about margin. According to our theory, GD only seperates the feature data, but does not maximize the margin. At step 2.5k, there are some feature point sitting right at the boundary of the linear seperator and SGD will break this balance, which leads to that sudden jump in the training error. In the end SGD generalizes better because the linear seperator in the last layer seperates the feature data by a larger margin. The notion of margin needs to be defined carefully as it scales with the norm of the weights and data.

\subsection{Norm of $\mathcal{L}$}
\label{blocknorm}
\begin{figure}[!htbp]
\begin{center}
\includegraphics[width=.3\linewidth]{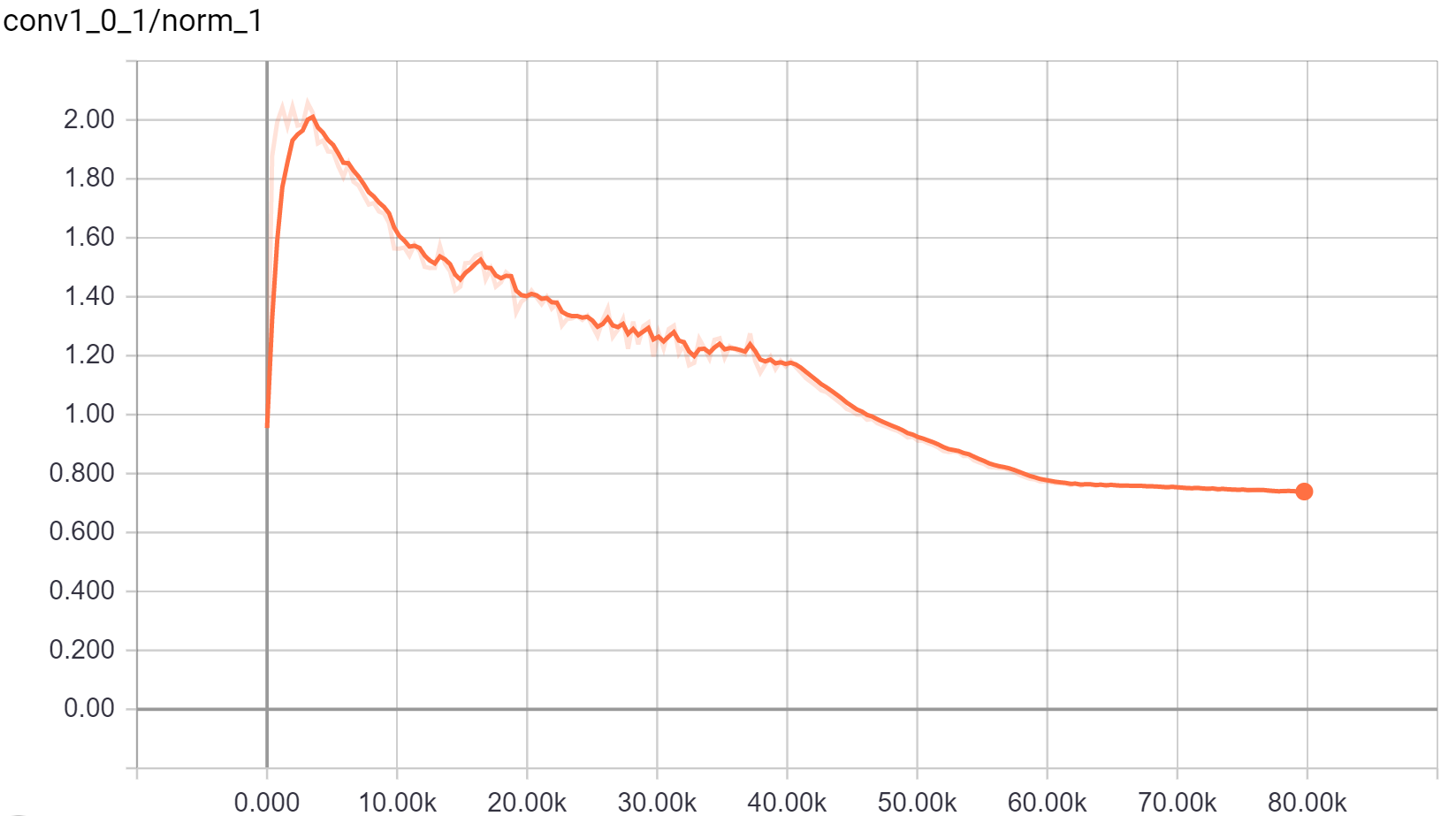}\quad\includegraphics[width=.3\linewidth]{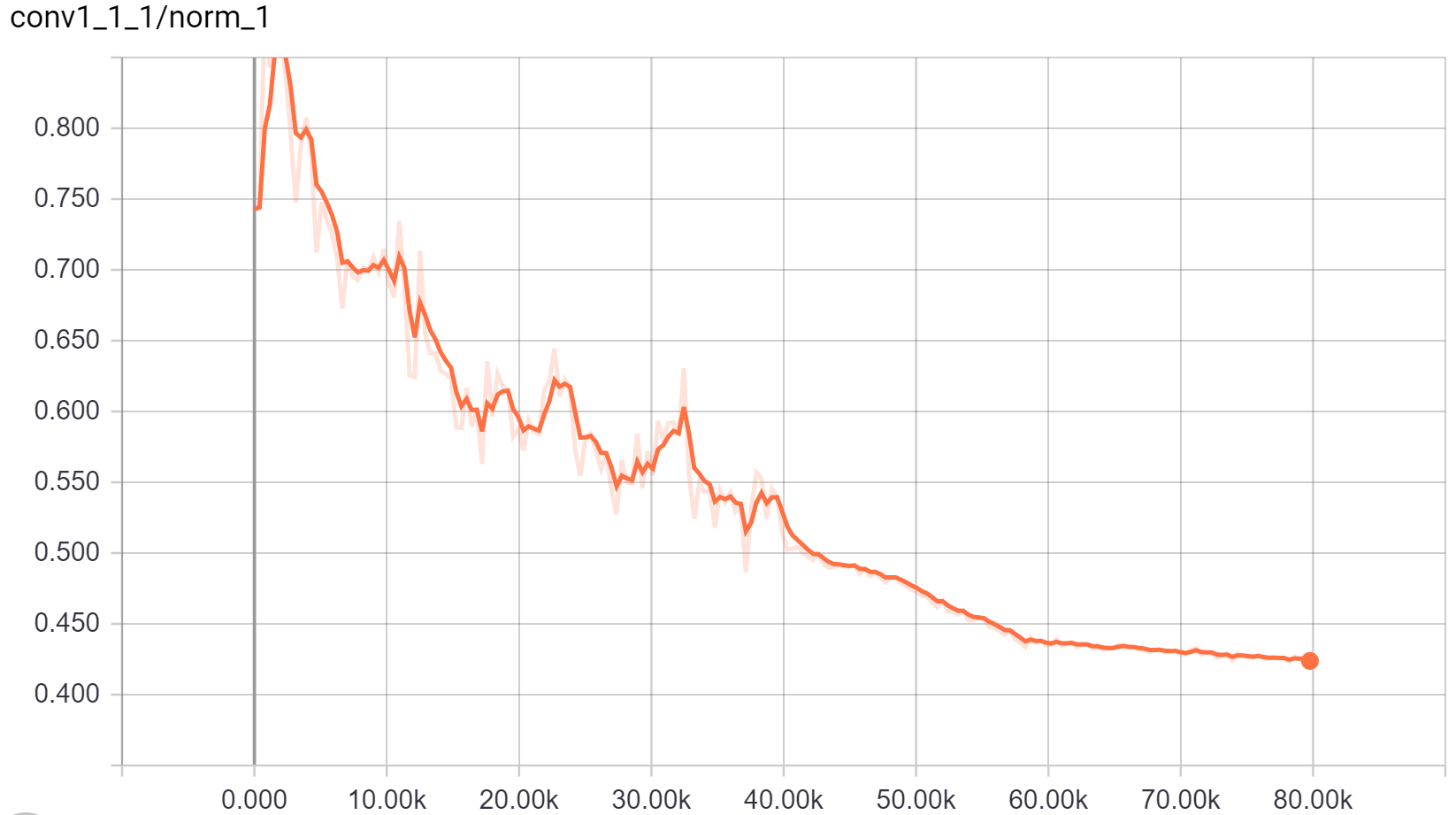}\quad\includegraphics[width=.3\linewidth]{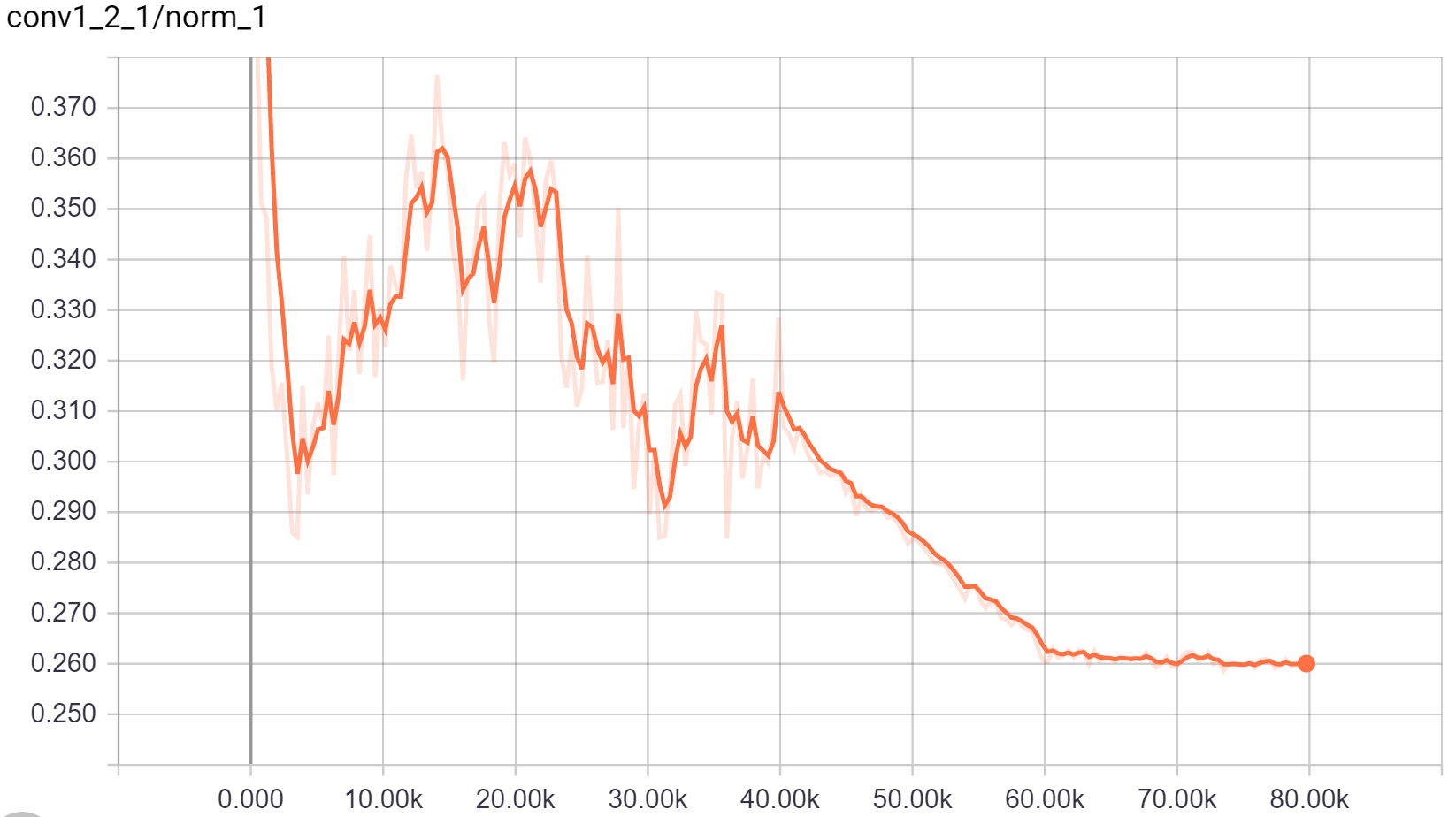}
\\[\baselineskip]
\includegraphics[width=.3\linewidth]{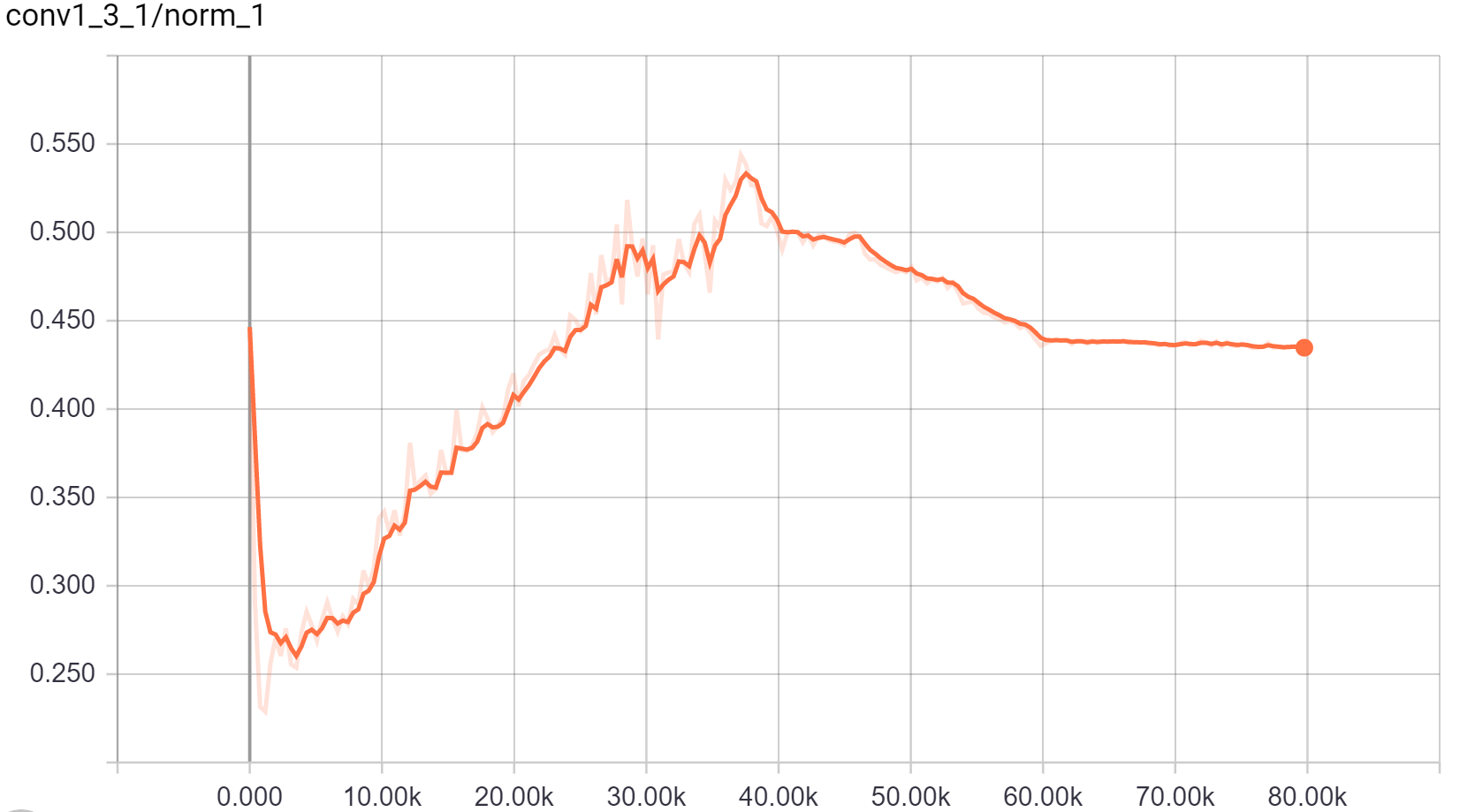}\quad\includegraphics[width=.3\linewidth]{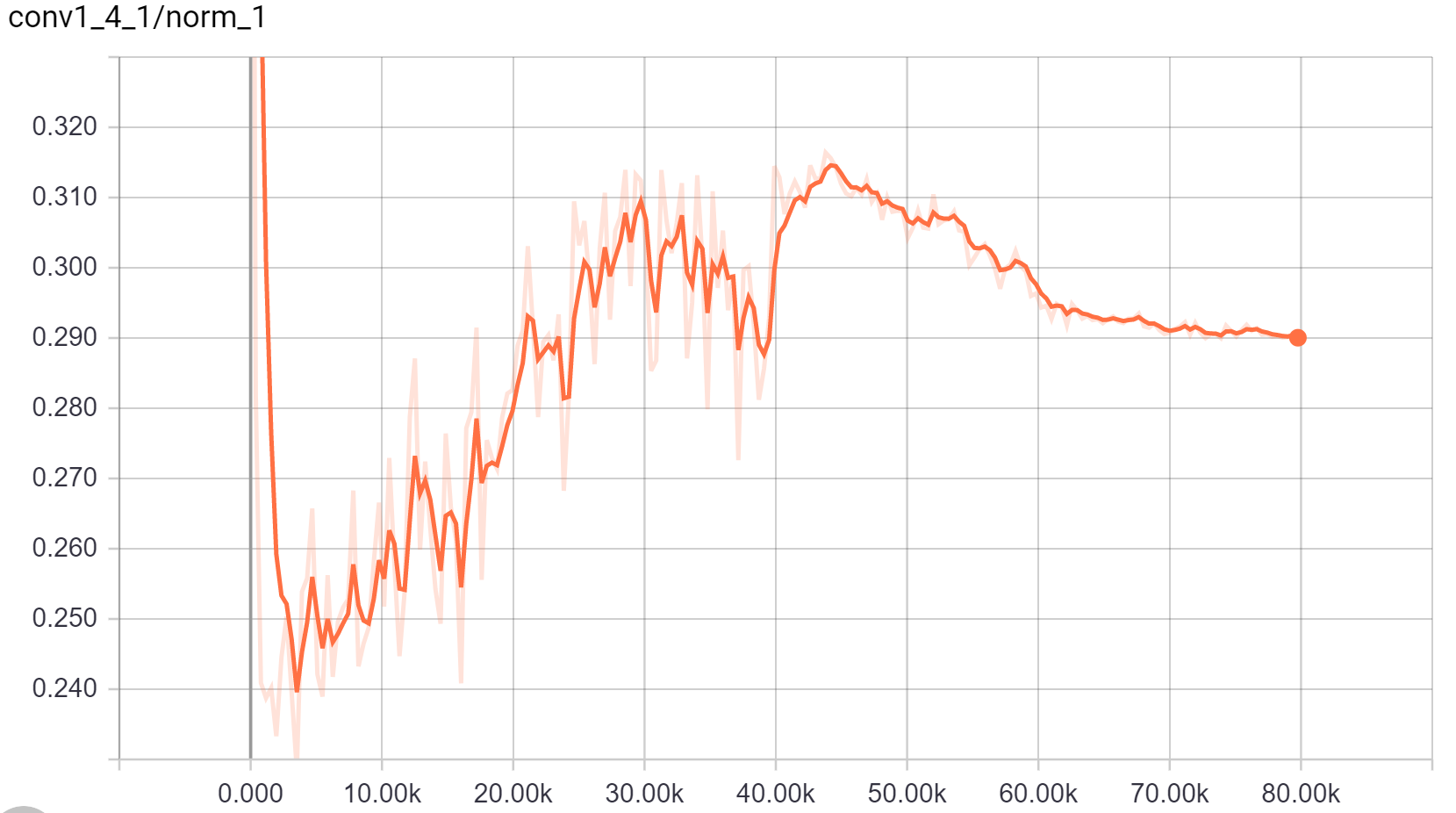}\quad\includegraphics[width=.3\linewidth]{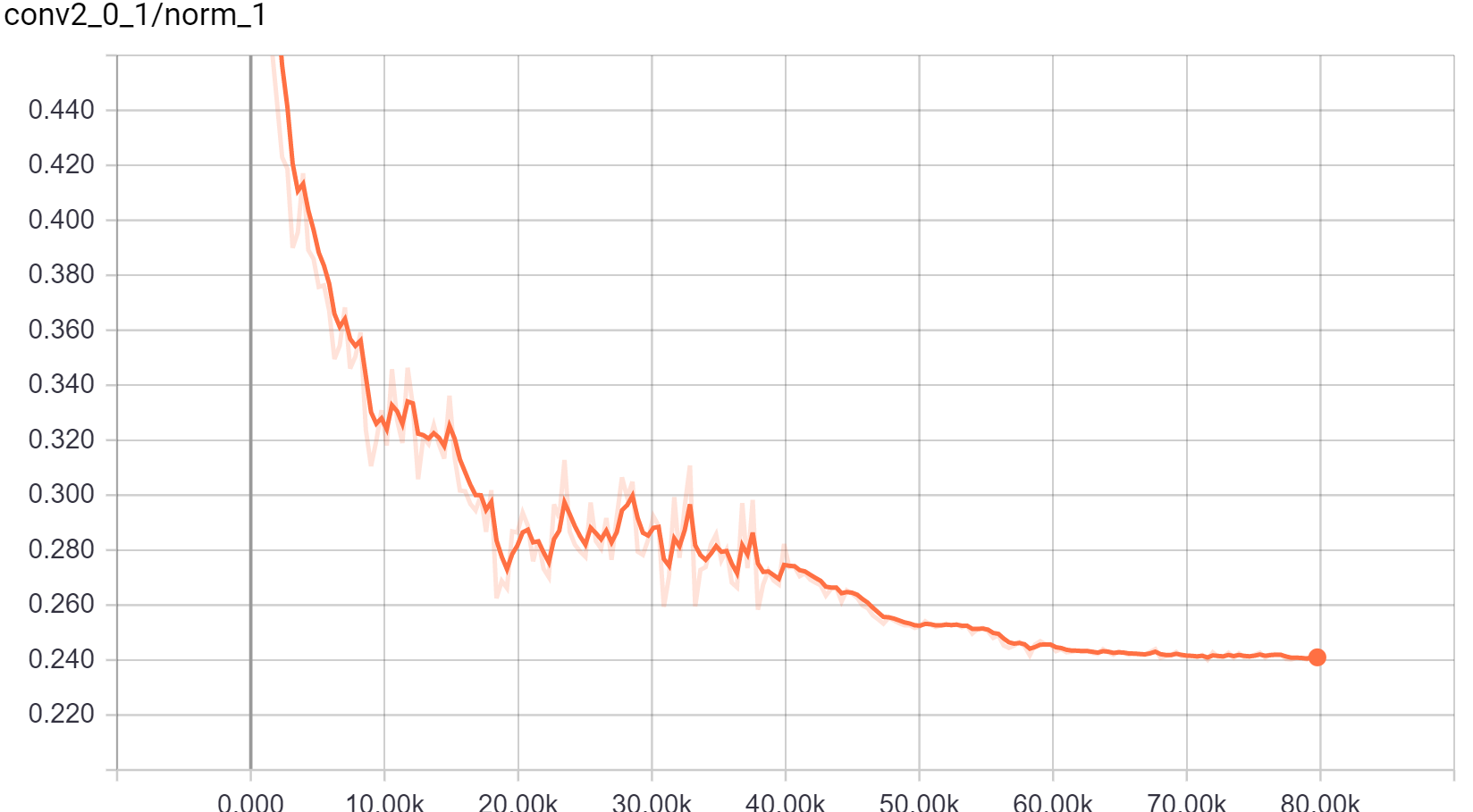}
\\[\baselineskip]
\includegraphics[width=.3\linewidth]{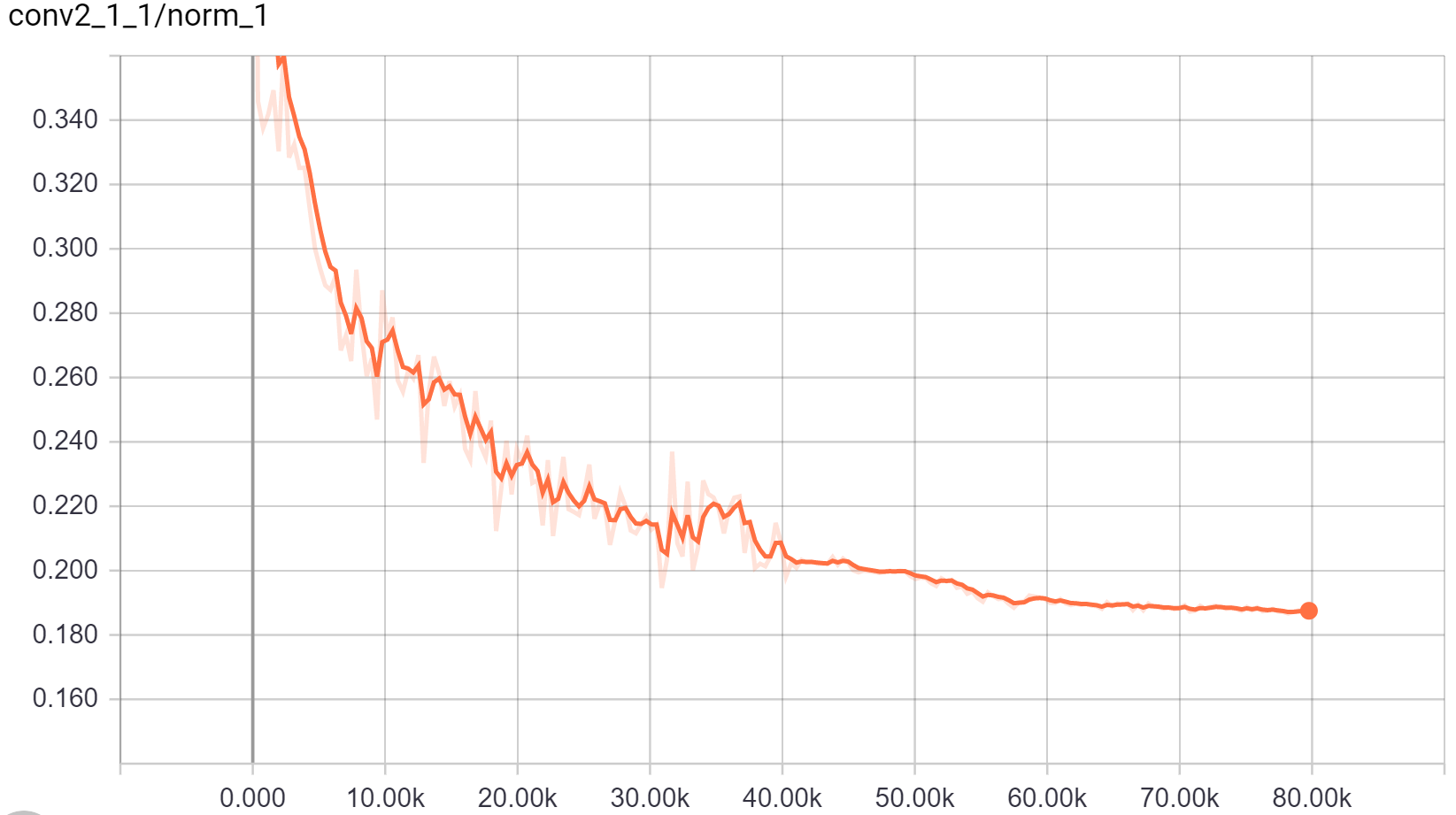}\quad\includegraphics[width=.3\linewidth]{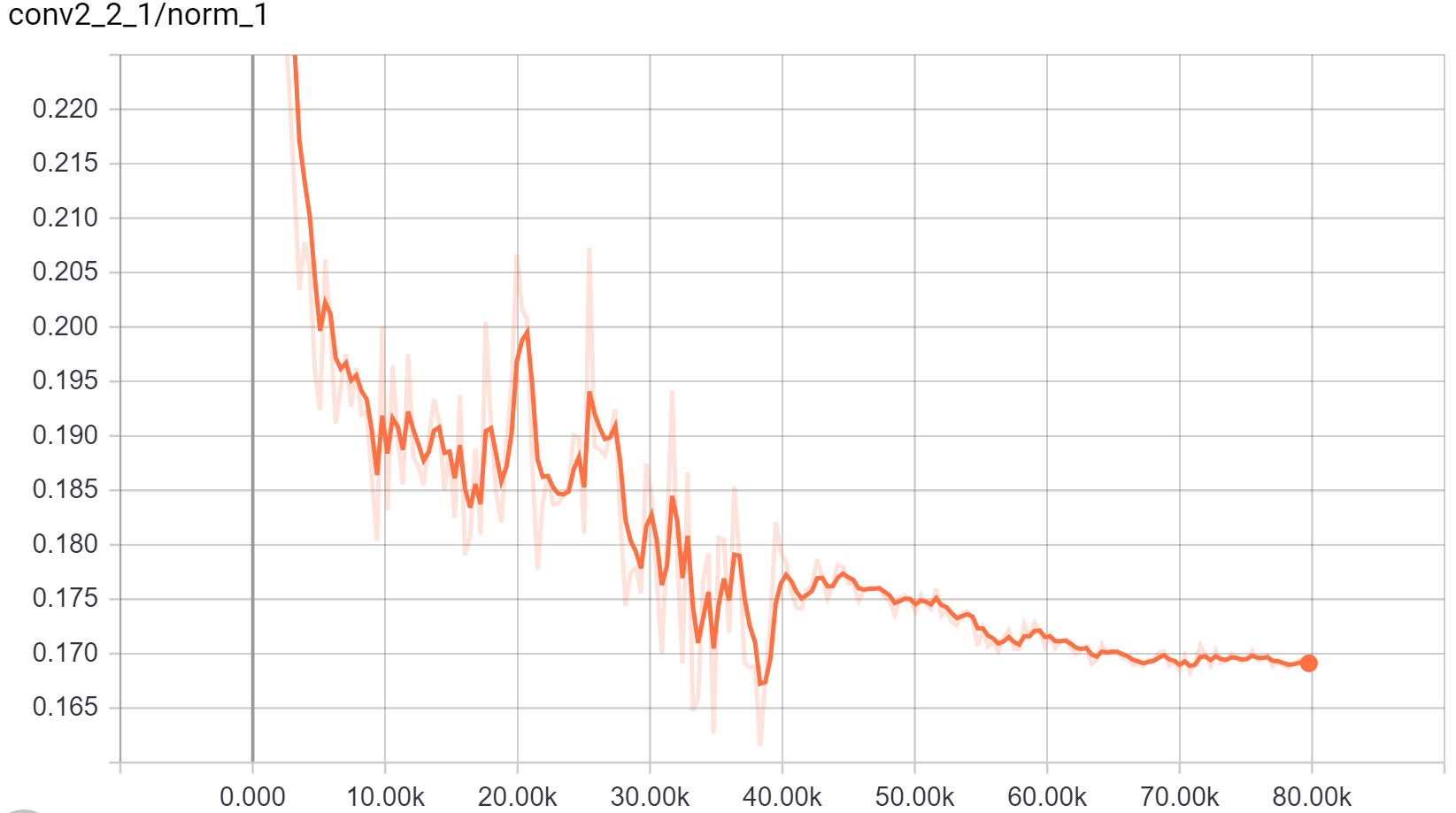}\quad\includegraphics[width=.3\linewidth]{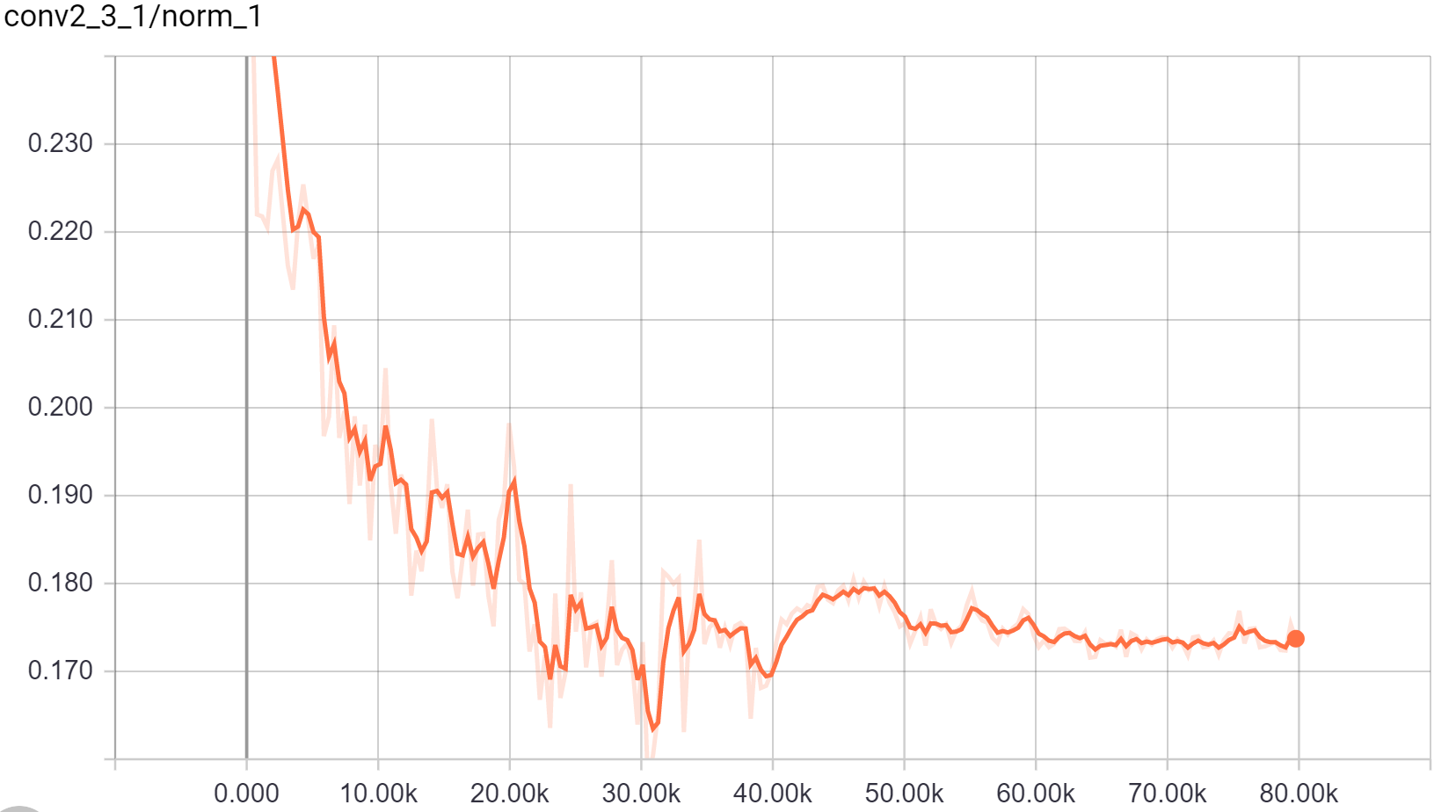}
\\[\baselineskip]
\includegraphics[width=.3\linewidth]{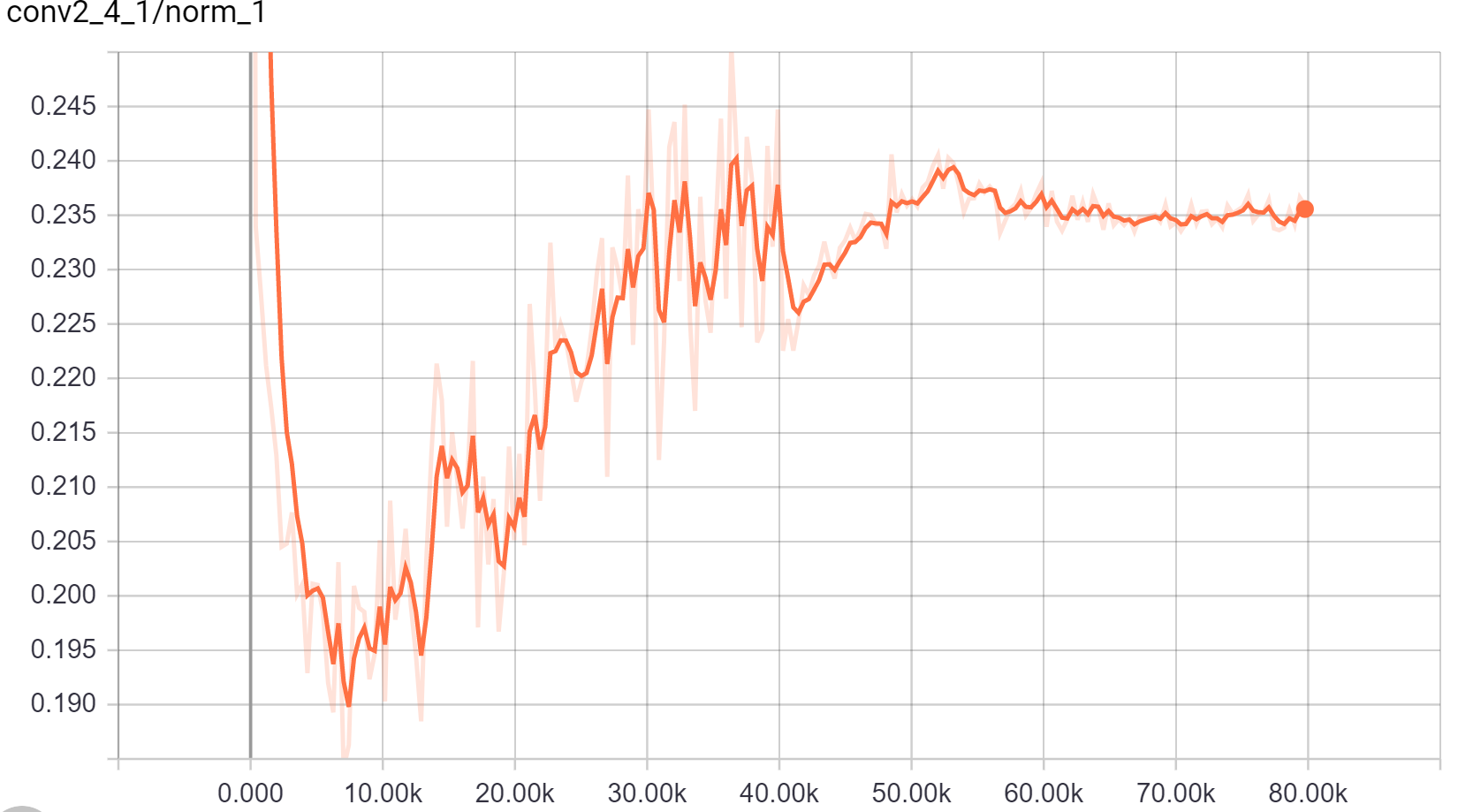}\quad\includegraphics[width=.3\linewidth]{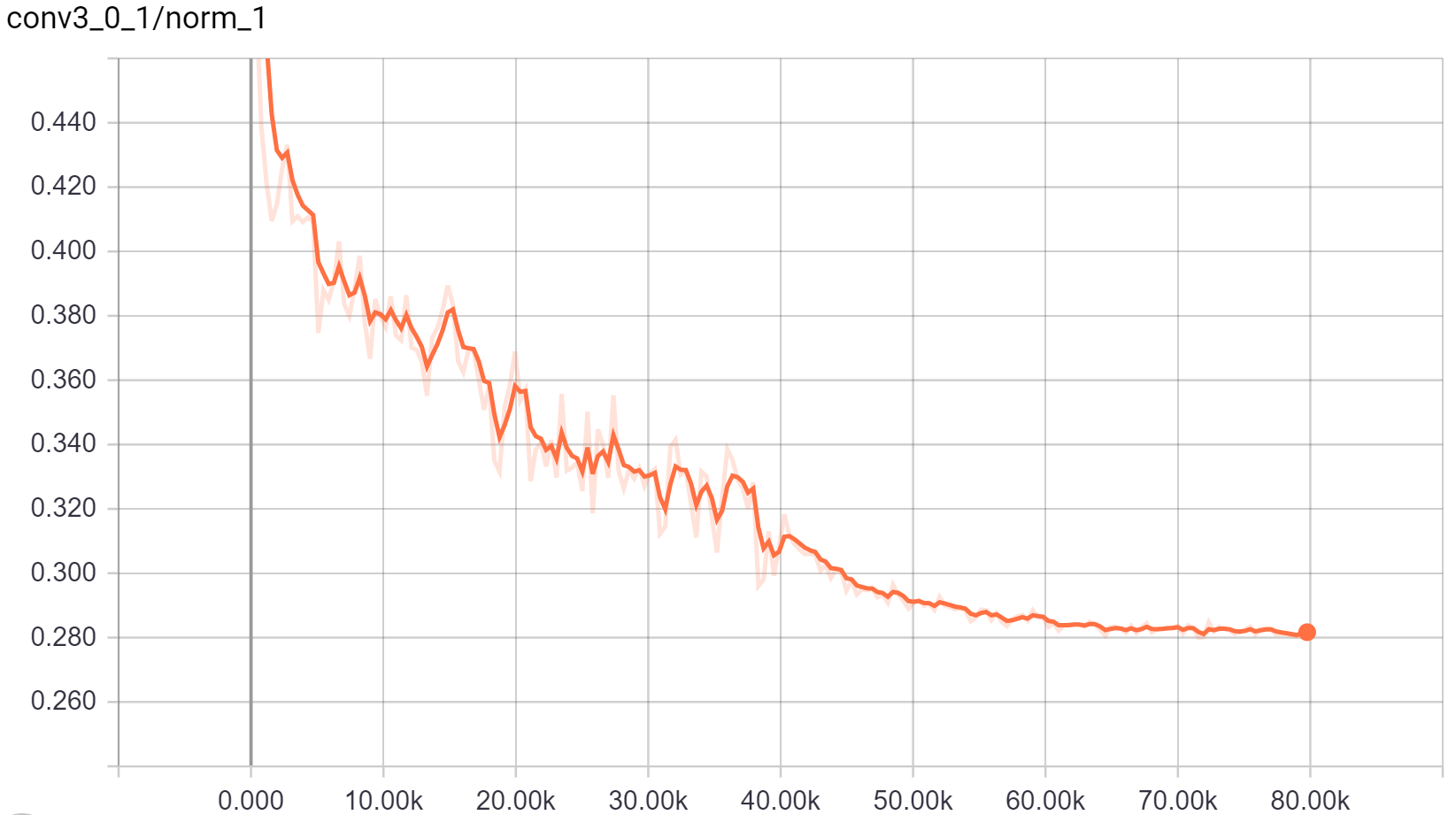}\quad\includegraphics[width=.3\linewidth]{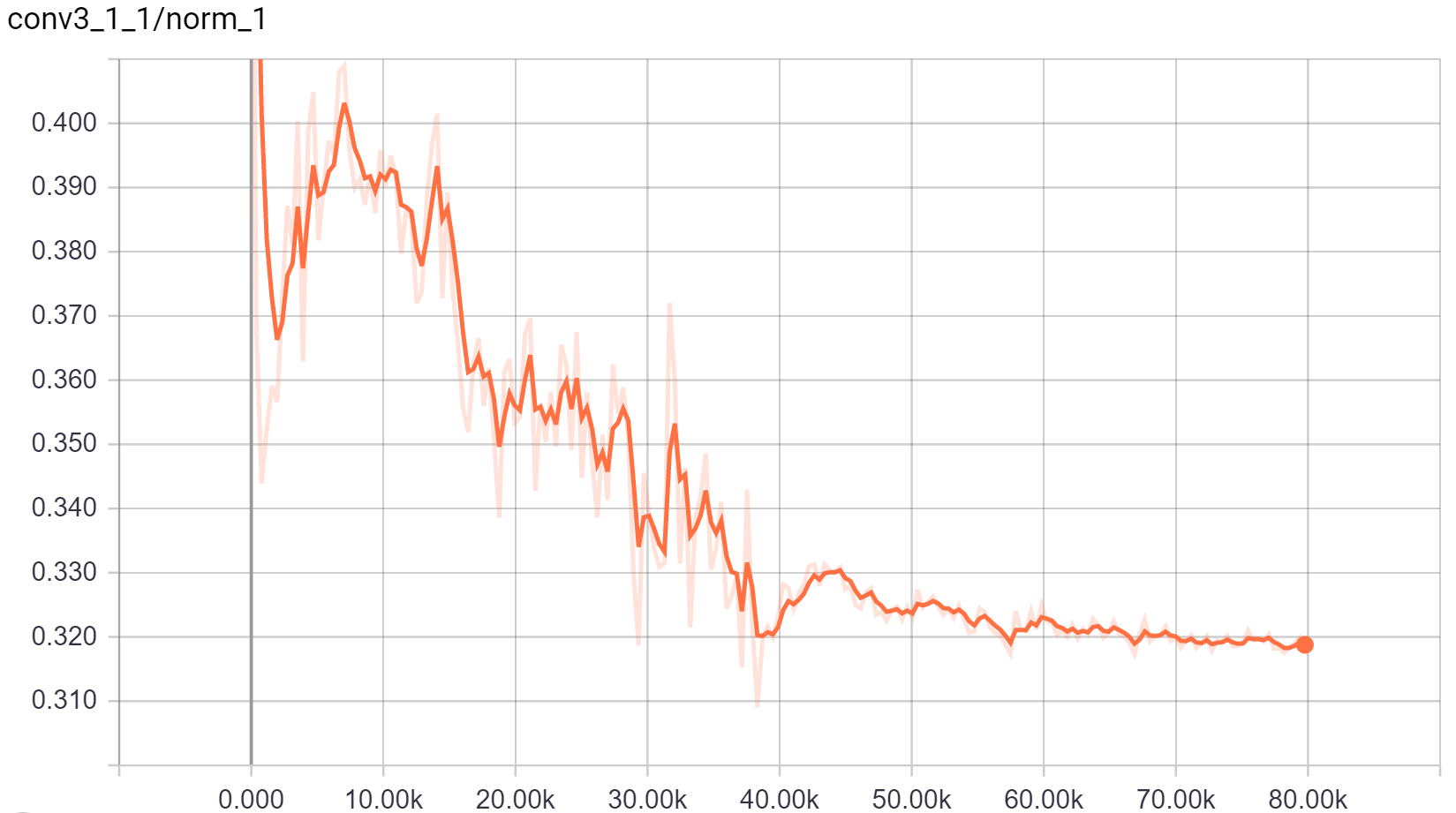}
\\[\baselineskip]
\includegraphics[width=.3\linewidth]{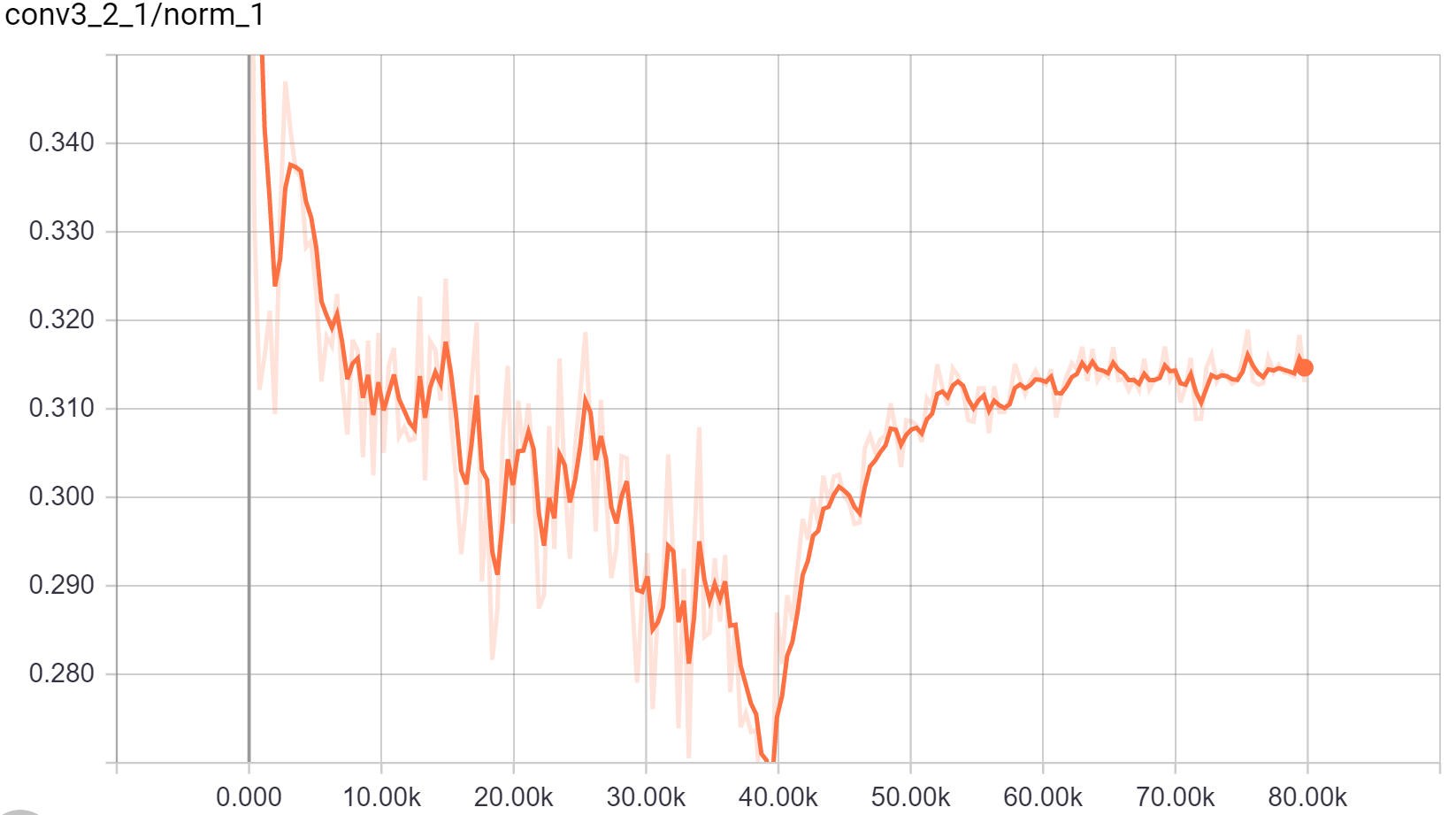}\quad\includegraphics[width=.3\linewidth]{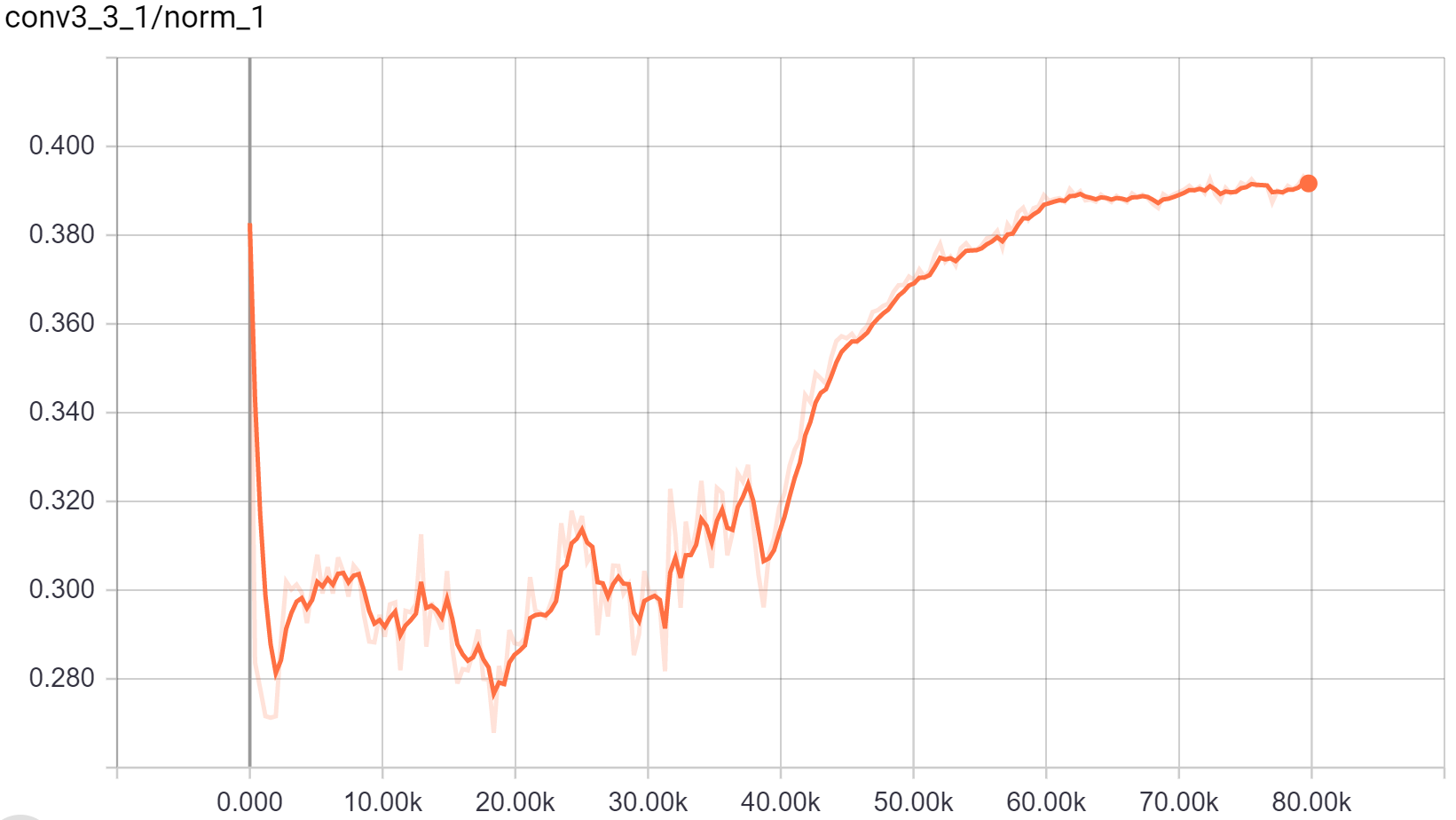}\quad\includegraphics[width=.3\linewidth]{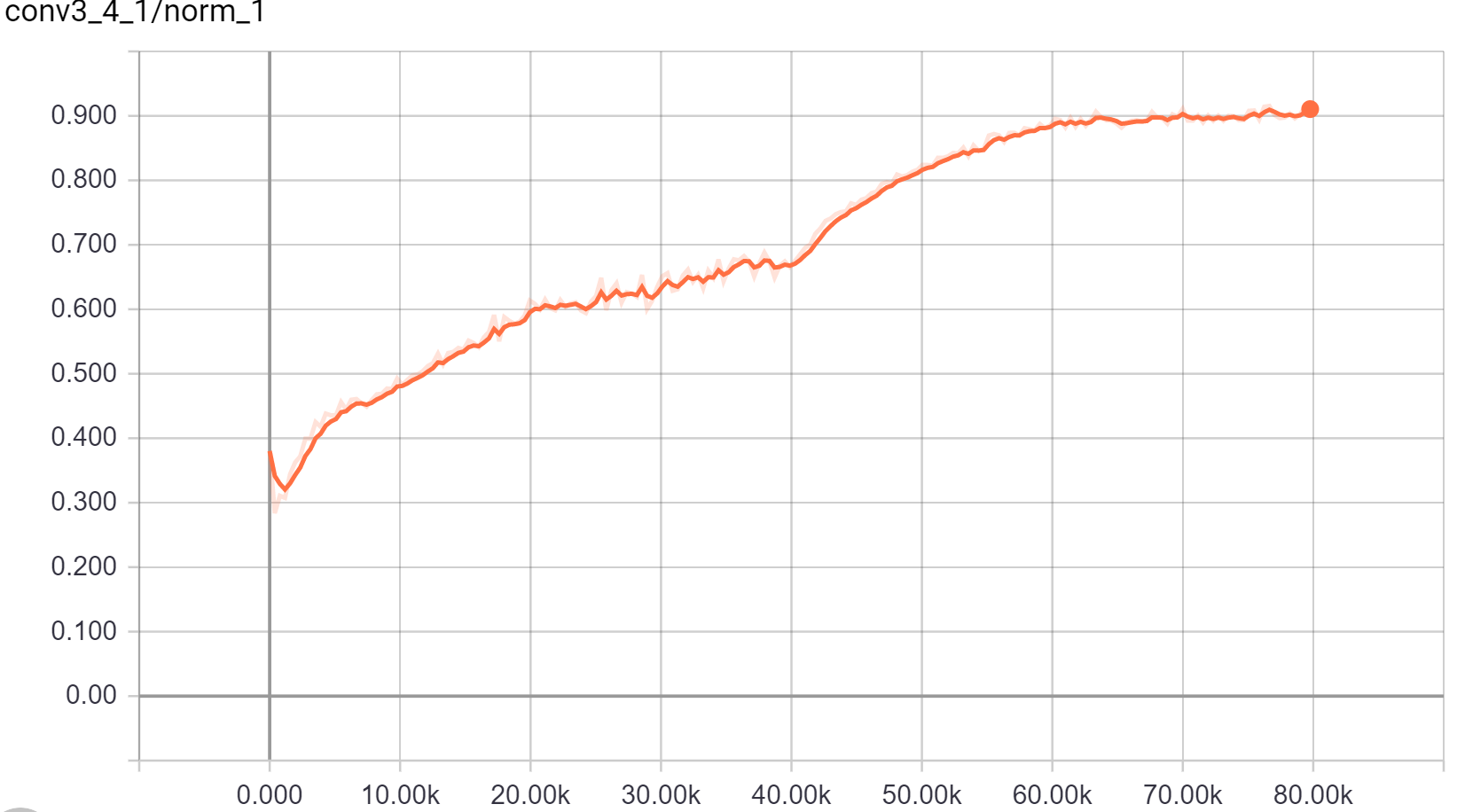}
\caption{Values of $\frac{|\mathcal{L}(x)|}{|x|}$ for each building blocks over the training steps.}
\end{center}
\end{figure}
We used tensorflow code uploaded on Github by wenxinxu, ran a 32-layer residual network on CIFAR-10 for 80k steps and computed the $\frac{|\mathcal{L}(x)|}{|x|}$ for each building blocks at each step. Here $|\cdot|$ is defined to be the square-root of the sum of squares of all entries in the tensor. We exported the output for all 15 building blocks over the training, from tensorboard. We can conclude that operator norm for every building block is smaller than 1, which meets our hypothesis.

\subsection{Advantage of ReLU over Sigmoid/Tanh}
\begin{figure}[!htbp]
\begin{center}
\includegraphics[width=.99\linewidth]{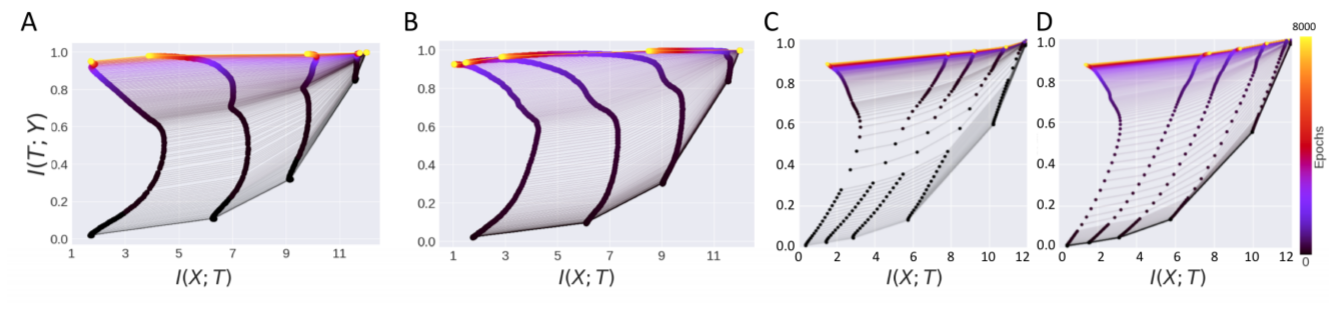}
\caption{Stochastic training and the information plane. (A) tanh network trained with SGD. (B) tanh
network trained with BGD. (C) ReLU network trained with SGD. (D) ReLU network trained with
BGD. Both random and non-random training procedures show similar information plane dynamics.}
\end{center}
\end{figure}
Here we use full reference to experimental results from \citet{michael2018on}.

Although Sigmoid and Tanh functions are mathematically invertible, they push large amount of information to the boundary of the range, which in practice will be classified as a single bin, making them highly noninvertible. On the other hand, ReLU keeps at least half of the information from input.

Their experimental result matches our theory: Tanh function compresses information and ReLU keeps fair amount of information.

We would like to emphasis again that a network only needs one compressive activation function, which is in the last layer playing the role similar to SVM.

\subsection{iRevNet}
\begin{figure}[!htbp]
\begin{center}
\includegraphics[width=.55\linewidth]{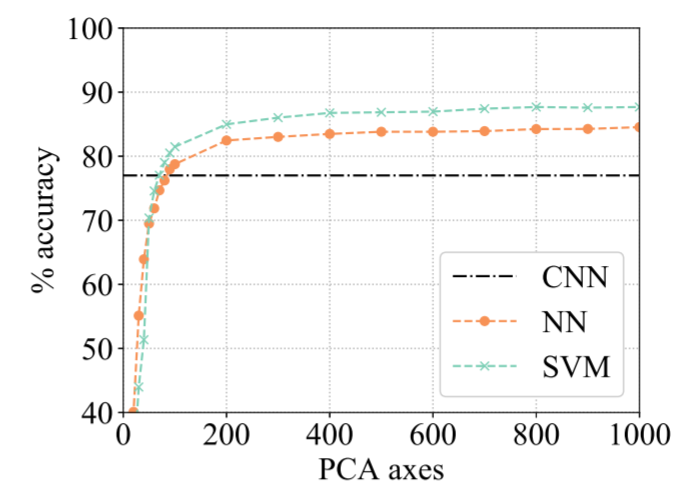}
\caption{Accuracy of a linear SVM and nearest neighbor against the number of principal components retaine.}
\end{center}
\end{figure}
Here we use full reference to experimental results from  \citet{jacobsen:hal-01712808}.

They projected the feature data of last layer of the network to a low dimensional space by PCA and did SVM on them. The good performance of SVM shows the intrinsic dimensionality of the feature data in the last layer is low, which supports our seperability assumptions.

\section{Conclusion}
In this paper, we analyzed the dynamics of the information plane proposed by \citet{shwartz2017opening}. More importantly, we gave a hypothesis for the learning structure of deep neural network, and answered the questions arised from \citet{michael2018on}.

\bibliographystyle{plainnat}
\bibliography{Info_Learning}

\appendix

\section{Bound for $\I(X;\tilde{Y})$}
\label{proof(1)}

The intuition is Jensen's inequality is loose if the term in $\log$ deviates from constant by a lot.

Consider the second quantity $\I(X;\tilde{Y})$ given by:
\begin{equation}
\I(X;\tilde{Y})=\int_{\tilde{Y}} \int_{X}p(x,\tilde{y}) \log(\frac{p(x,\tilde{y})}{p(x)p(\tilde{y})})dxd\tilde{y}
\end{equation}
Our goal is to show that (1) is decreasing at a rate upper bounded by polynomial.

Denote $f(x,\tilde{y}) = \frac{p(x)p(\tilde{y})}{p(x,\tilde{y})}$, consider the 2nd order Taylor form around 1:
\begin{equation}
\log(f(x,\tilde{y})) = \log(1)+(f(x,\tilde{y})-1)-\frac{1}{2c^2}(f(x,\tilde{y})-1)^2,
\end{equation}
where $c$ is between $f(x,\tilde{y})$ and 1.

Observing $\E(f(x,\tilde{y}))=1$, then substitute (2) into (1) get:
\begin{equation}
\I(X; \tilde{Y})=\int_{\tilde{Y}} \int_{X} p(x,\tilde{y}) \frac{1}{2c^2}(f(x,\tilde{y})-1)^2dxd\tilde{y}.
\end{equation}

If $c \geq 1$, then $\frac{1}{2c^2}(f(x,\tilde{y})-1)^2 \leq \frac{1}{2}(f(x,\tilde{y})-1)^2$; if $c \leq 1$, then $\frac{1}{2c^2}(f(x,\tilde{y})-1)^2 \leq \frac{1}{2f(x,\tilde{y})^2}(f(x,\tilde{y})-1)^2=\frac{1}{2}(1-\frac{1}{f(x,\tilde{y})})^2$.

It follows we can upper bound (3) by:
\begin{equation}
\begin{array} {lcl}
\I(X; \tilde{Y}) & \leq & \frac{1}{2}\int_{\tilde{Y}} \int_{X} p(x,\tilde{y}) [(f(x,\tilde{y})-1)^2 + (1-\frac{1}{f(x,\tilde{y})})^2]dxd\tilde{y} \\
& = & \Var(f(X,\tilde{Y}))+\Var(\frac{1}{f(X,\tilde{Y})})\\
& = & \Var(\frac{p(\tilde{y})}{p(\tilde{y}|x)}) + \Var(\frac{p(\tilde{y}|x)}{p(\tilde{y})})\\
& \leq & 2(\Var(p(\tilde{y})) + \Var(p(\tilde{y}|x)))
\end{array}
\end{equation}

\section{Approximation for $\I(Y;\tilde{Y})$}
\label{proof(2)}
Consider
\begin{equation}
\I(Y;\tilde{Y})=\int_{\tilde{Y}} \int_{Y}p(y,\tilde{y}) \log(\frac{p(y,\tilde{y})}{p(y)p(\tilde{y})})dy d\tilde{y}.
\end{equation}
which can be regarded as $\mathbb{E}Z$ for some random variable $Z$ with probability density:
\begin{equation}
p_Z(\log(\frac{p_{\tilde{Y},Y}(y,\tilde{y})}{p_{\tilde{Y}}(\tilde{y})p_Y(y)}))=p_{\tilde{Y},Y}(y,\tilde{y}).
\end{equation}

By Appendix~\ref{subexp}, $Z$ is subexponential, so by Bernstein's inequality(see \citet{vershynin_2018}) we have, for sample size large enough, a high probability bound guarantees the empirical approximation of $\I(Y;\tilde{Y})$:
\begin{equation}
\sum_{i=1}^n \log(\frac{p(y_i,\tilde{y_i})}{p(y_i)p(\tilde{y_i})}).
\end{equation}
which has another empirical version:
\begin{equation}
\sum_{i=1}^n \log(\frac{\sum_{j=1}^n p(y_i,\tilde{y_i}|x_j)}{p(y_i)p(\tilde{y_i})}) = \sum_{i=1}^n \log(\frac{\sum_{j=1}^n p(y_i|\tilde{y_i},x_j)p(\tilde{y_i}|x_j)}{p(y_i)p(\tilde{y_i})})= \sum_{i=1}^n \log(\frac{\sum_{j=1}^n p(y_i|x_j)p(\tilde{y_i}|x_j)}{p(y_i)p(\tilde{y_i})}).
\end{equation}
Here we make an assumption that for the models we trained over time, it's output $\tilde{Y}$ is approximately uniform distributed over the finite labels. So we treat $p(y_i)p(\tilde{y_i})$ as constant for all $i$. Also note that $p(\tilde{y_i}|x_j)$ is given by the model $f(\tilde{y_i}|x_j;w_t)$.

The prediction $\tilde{y}_i$ satisfies:
\begin{equation}
 \tilde{y}_i = \argmax_{y} f(y|x_i,w_t),
\end{equation}

So $\I(Y;\tilde{Y})$ is now of the form:
\begin{equation}
\begin{array} {lcl}
\I(Y;\tilde{Y}) & \geq & A + \sum_{i=1}^n \log(\sum_{j=1}^n p(y_i|x_j)f(\tilde{y_i}|x_j;w_t))\\
& \geq & A + \sum_{i=1}^n \log(p(y_i|x_i)f(\tilde{y_i}|x_i;w_t))\\
& \geq & A + \log(\prod_{i=1}^n f(\tilde{y_i}|x_i;w_t)) + \sum_{i=1}^n \log(p(y_i|x_i))\\
& \geq & A + \log(\prod_{i=1}^n f(y_i|x_i;w_t)) + \sum_{i=1}^n \log(p(y_i|x_i))
\end{array}
\end{equation}
with high probability for some constant $A$.

\section{Continuous entropy}
\label{ctsinfo}

The natural definition of a mutual information of a discrete random variable $X$  to $g(x)$ where $g$ is a deterministic function, if we try to define it at all, is as follows:
\begin{equation}
\I(X; g(X)) = -\sum_x p(x)\log p(x),
\end{equation}
where for simplicity we assume the range of $X$ is in $\mathbb{R}$.

If instead we consider $X$ has a continuous range, for example $\mathbb{R}$, we can take a uniform mesh over $\mathbb{R}$ with interval $\Delta$ and do the approximation:
\begin{equation}
p(X \in [x,x+\Delta]) = \int_{x}^{x+\Delta} f(a) da \approx f(x)\Delta.
\end{equation}

And therefore:
\begin{equation}
\I(X; g(X)) \approx -\sum_{i=-\infty}^{i=\infty} f(x_i)\Delta \log(f(x_i)\Delta),
\end{equation}
where $\{[x_i,x_i+\Delta]\}$ is the mesh on $\mathbb{R}$. We expect this estimation to be precise if we take $\Delta \rightarrow \infty$.

But this limit is different from the intuitive definition of differential entropy $\I(X)$, provide it exists:
\begin{equation}
\I(X)=\int_{\mathbb{R}} f(x)\log(f(x))=-\lim_{\Delta \rightarrow 0} \sum_{i=-\infty}^{i=\infty} f(x_i)\Delta \log(f(x_i)).
\end{equation}

Intuitively the $\log$ term in (3) will blow down as $\Delta \rightarrow 0$.

In practice people estimate the mutual information by (3) but it doesn't yield a meaningful quantity. In particular, we don't know whether (3) converges or not if we take $\Delta \rightarrow 0$.

Here key point here is that for general random variables $X,Y$, if $p(x,y)$ is not degenerate over some open interval $U \subset \mathbb{R}$, then by inverse function theorem, there exists some invertible relationship between $X,Y$ in $U$ and we need infinite amount of information to describe what is happening in $U$. 

But if we consider instead the mutual information between a discrete random variable $Y$ with finite range and a continuous random variable $X$ with continuous density, then the estimation would take the form:
\begin{equation}
\I(X;Y)  \approx  \sum_{i=1}^n \sum_{j=-\infty}^{\infty} p(x_j,y_i)\Delta \log(\frac{p(x_j|y_i)\Delta}{\sum_{l}p(x_j|y_l)\Delta}),
\end{equation}
which has a limit:
\begin{equation}
\I(X;Y) =  \sum_{i=1}^n \int_{\mathbb{R}} f(x,y_i) \log(\frac{f(x|y_i)}{\sum_{l}f(x|y_l)})dx.
\end{equation}
So if the analytical form (6) of $\I(X;Y)$ is finite, we know our practical approximation is meaningful.

In practice, the true quantity $\I(X;Y)$ is usually finite. For example, in MNIST, both image $X$ and its label $Y$ are essentially discrete so their mutual information can be defined in a strict discrete sense.

As a conclusion, it's always sensible to define the mutual information between discrete finite random variable and continuous random variable in an exact integral form.

\section{Heat Equation}
\label{heat}

Given a Cauchy problem for heat equation:

\begin{equation}
\begin{array} {lcl}
u_t & = ku_{xx} \\
u(x,0) & = \phi(x)
\end{array}
\end{equation}

The solution is known as:
\begin{equation}
u(x,t)=\frac{1}{\sqrt{4\pi kt}}\int^{\infty}_{-\infty} e^{-\frac{(x-y)^2}{4kt}} \phi(y)dy = K(x,t)*\phi(x),
\end{equation}

where $K(x,t) = \frac{1}{\sqrt{4\pi kt}}e^{-\frac{x^2}{4kt}}.$

Without losing generality assuming $k=1$, consider:
\begin{equation}
||K(x,t)||_2^2 = Ct^{-1}\int_{-\infty}^{\infty} e^{-\frac{x^2}{2t}}dx.
\end{equation}
Substitude $x' = t^{-1/2}x$ gives:
\begin{equation}
||K(x,t)||_2^2 = Ct^{-1/2}\int_{-\infty}^{\infty} e^{-\frac{x'^2}{2}}dx'=C't^{-1/2}.
\end{equation}

\section{Convergence in Banach Space}
\label{operator}

\begin{claim}
Consider Holder Space $C^{0,1}(\overline{U})$, where $\overline{U}$ is the closure of some bounded open set $U$, with equiped Holder norm $|\mathcal{L}| = \alpha \sup_{x \in U}|\mathcal{L}(x)| + \sup_{x,y \in U, x \neq y} \{\frac{|\mathcal{L}(x)-\mathcal{L}(y)|}{|x-y|}\}$, here $\alpha$ is some positive scalar. If $\mathcal{L} \in C^{0,1}(\overline{U})$ and $|\mathcal{L}|<1$, then there exists $\mathcal{B}$ such that $\mathcal{B}(I+\mathcal{L}) = (I+\mathcal{L})\mathcal{B} = I$.
\end{claim}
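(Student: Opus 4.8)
The plan is to realize $\mathcal{B}$ as the functional inverse of the map $T := I+\mathcal{L}$, $T(x) = x + \mathcal{L}(x)$, via the Banach fixed-point theorem. The Neumann-type series $\sum_{k\ge 0}(-1)^k\mathcal{L}^{\circ k}$ that one writes down in the bounded-linear-operator setting is only a heuristic here: $\mathcal{L}$ is genuinely nonlinear (it is a composition of ReLU, convolution, dropout, etc.), so the telescoping identity $(I+\mathcal{L})\circ\big(\sum_k(-1)^k\mathcal{L}^{\circ k}\big) = I$ is false in general. The one estimate that survives, and that drives everything, is submultiplicativity of the Lipschitz seminorm under composition, which gives $\mathrm{Lip}(\mathcal{L}) \le |\mathcal{L}| < 1$; I would use this to turn $T$ into a bi-Lipschitz bijection.

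The core step: for a fixed $y$ in the relevant domain, consider $\Phi_y(x) := y - \mathcal{L}(x)$. Then $|\Phi_y(x_1) - \Phi_y(x_2)| = |\mathcal{L}(x_1) - \mathcal{L}(x_2)| \le \mathrm{Lip}(\mathcal{L})\,|x_1 - x_2|$ with $\mathrm{Lip}(\mathcal{L}) \le |\mathcal{L}| < 1$, so $\Phi_y$ is a contraction on a complete metric space (the closed set $\overline{U}$, or the ambient Banach space, is complete). Hence $\Phi_y$ has a unique fixed point, which I define to be $\mathcal{B}(y)$; the fixed-point equation $\mathcal{B}(y) = y - \mathcal{L}(\mathcal{B}(y))$ is exactly $(I+\mathcal{L})(\mathcal{B}(y)) = y$, i.e. $(I+\mathcal{L})\circ\mathcal{B} = I$. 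For the reverse composition, one checks $T$ is injective: $T(x_1) = T(x_2)$ gives $x_1 - x_2 = -(\mathcal{L}(x_1) - \mathcal{L}(x_2))$, hence $|x_1 - x_2| \le \mathrm{Lip}(\mathcal{L})|x_1 - x_2|$, forcing $x_1 = x_2$; together with $(I+\mathcal{L})\circ\mathcal{B} = I$ this yields $\mathcal{B}\circ(I+\mathcal{L}) = I$. Finally the standard perturbation estimate gives $|\mathcal{B}(y_1) - \mathcal{B}(y_2)| \le (1-\mathrm{Lip}(\mathcal{L}))^{-1}|y_1 - y_2|$ and a similar bound on $\sup|\mathcal{B}|$, so $\mathcal{B} \in C^{0,1}$.

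I expect the main obstacle to be domain bookkeeping rather than the contraction estimate: for $\Phi_y$ to be a self-map of a complete set one must either work on the whole ambient Banach space (so that "$\overline{U}$" is effectively that space), or, keeping the bounded set $\overline{U}$, assume $\mathcal{L}$ maps $\overline{U}$ into $\overline{U}$ and define $\mathcal{B}$ only on the image $T(\overline{U})$, reading the claimed identities $\mathcal{B}(I+\mathcal{L}) = (I+\mathcal{L})\mathcal{B} = I$ as identities on $\overline{U}$ and on $T(\overline{U})$ respectively. A secondary point is the weight $\alpha$ in the stated norm: it does not affect $\mathrm{Lip}(\mathcal{L}) \le |\mathcal{L}|$, so the contraction argument goes through for every $\alpha > 0$; only if one insisted on making the Neumann-series phrasing literally correct in a linearized/operator setting would one need to choose $\alpha$ so that $|\cdot|$ is genuinely submultiplicative, which is a routine elementary computation. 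The contraction-mapping route above is, in my view, the cleanest fully rigorous path.
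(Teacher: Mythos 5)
Your proof is correct, but it takes a genuinely different route from the paper's. The paper defines $\mathcal{B}=\sum_{n=0}^{\infty}(-\mathcal{L})^{n}$, argues the partial sums are Cauchy in the Banach space $C^{0,1}(\overline{U})$ because $|\mathcal{L}|<1$, and then ``multiplies termwise'' to conclude $\mathcal{B}(I+\mathcal{L})=(I+\mathcal{L})\mathcal{B}=I$ --- i.e.\ it runs the classical Neumann-series argument as if $C^{0,1}(\overline{U})$ under composition were a Banach algebra. You instead invert $I+\mathcal{L}$ directly via the Banach fixed-point theorem: $\Phi_y(x)=y-\mathcal{L}(x)$ is a contraction because $\mathrm{Lip}(\mathcal{L})\le|\mathcal{L}|<1$, its unique fixed point defines $\mathcal{B}(y)$, injectivity of $I+\mathcal{L}$ upgrades this to a two-sided inverse, and the standard perturbation estimate gives $\mathrm{Lip}(\mathcal{B})\le(1-\mathrm{Lip}(\mathcal{L}))^{-1}$, so $\mathcal{B}\in C^{0,1}$. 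Your remark that the telescoping identity fails for nonlinear $\mathcal{L}$ pinpoints exactly what the paper's proof glosses over: manipulating $\sum_{n}(-\mathcal{L})^{n}$ termwise under composition requires additivity of $\mathcal{L}$ (and submultiplicativity of the H\"older norm under composition, plus range compatibility so the iterates are even defined on $\overline{U}$), none of which hold for the ReLU/convolution/dropout blocks the claim is meant to cover; the series argument is literally valid only for linear $\mathcal{L}$, in which case it and your construction produce the same inverse. So your contraction-mapping route is not merely an alternative but the one that actually establishes the claim in the intended nonlinear setting, at the cost of the domain bookkeeping you already flag (extend $\mathcal{L}$ to the ambient space, or read $\mathcal{B}$ as defined on $(I+\mathcal{L})(\overline{U})$) --- a caveat the paper's proof silently shares.
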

\begin{proof}

It's well known that $C^{0,1}(\overline{U})$ is a Banach space (\citet{lax2002functional}). Note that $\alpha$ can be scale down to an arbitrary small constant in practice, which has no influence to our proof.

Define
\begin{equation} \label{eq:banachseq}
\mathcal{B} = \sum_{n=0}^{\infty} (-\mathcal{L})^n.
\end{equation}
Since $|\mathcal{L}|<1$, the sequence in (\ref{eq:banachseq}) is a Cauchy sequence. So it coverges in Banach space. Convergence sequence can be multiplied termwise, it follows that
\begin{equation} \label{eq:banachseq}
\mathcal{B}\mathcal{L} = \mathcal{L}\sum_{n=0}^{\infty} (-\mathcal{L})^n=-\sum_{n=1}^{\infty} (-\mathcal{L})^n=-(\mathcal{B}-I).
\end{equation}
So $\mathcal{B}(I+\mathcal{L})= I$. The other equality can be shown similarly.
\end{proof}

\section{Subexponential}
\label{subexp}

\begin{claim}
Let $A,B$ be two discrete real-valued random variable with probability density $p_A, p_B$. Then the real valued random variable $Z$ with probability density $p_Z(\log(\frac{p_{AB}(a,b)}{p_A(a)p_B(b)}))=p_{AB}(a,b)$ is subexponential.
\end{claim}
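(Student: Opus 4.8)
The plan is to show that $Z$ is subexponential by controlling its two tails separately and then invoking the standard equivalence (see \citet{vershynin_2018}) between an exponential tail bound $\mathbb{P}(|Z|\ge t)\le 2e^{-t/K}$ and finiteness of the subexponential norm $\|Z\|_{\psi_1}$. It helps to notice that $Z$ is precisely the \emph{information density} of the pair $(A,B)$, i.e. $Z=\log\frac{p_{AB}(A,B)}{p_A(A)p_B(B)}$ with $(A,B)\sim p_{AB}$; its lower tail is automatically well behaved, and the upper tail is the delicate part.

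For the lower tail I would use a one-line counting argument requiring no hypotheses: for $t>0$ the event $\{Z\le -t\}$ is the set of atoms $(a,b)$ with $p_{AB}(a,b)\le e^{-t}p_A(a)p_B(b)$, so summing $p_{AB}$ over it gives
\begin{equation}
\mathbb{P}(Z\le -t)=\sum_{(a,b):\,Z\le -t}p_{AB}(a,b)\le e^{-t}\sum_{(a,b)}p_A(a)p_B(b)=e^{-t}.
\end{equation}
Thus the left tail decays exponentially for every pair of discrete random variables. For the upper tail I would invoke the feature of our setting that $A=Y$ and $B=\tilde Y$ range over finitely many bins, so $Z$ takes only finitely many values; since $p_{AB}(a,b)\le\min(p_A(a),p_B(b))\le 1$ we get the deterministic bound $Z\le M:=2\log(1/p_{\min})$, where $p_{\min}$ is the smallest positive atom probability in the supports of $p_A$ and $p_B$. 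A random variable bounded above with an exponential lower tail is subexponential: combining $Z\le M$ with the display above yields $\mathbb{P}(|Z|\ge t)\le e^{-t}$ for $t\ge M$ and $\le 1$ otherwise, which is dominated by $2e^{-t/K}$ for suitable $K$, so $\|Z\|_{\psi_1}<\infty$; equivalently one checks $\mathbb{E}\,e^{|Z|/K}\le e^{M/K}+\tfrac{1}{K-1}\le 2$ once $K$ is large. This is all Appendix~\ref{proof(2)} actually needs, since there $Y$ and $\tilde Y$ are the discretized predictions.

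The main obstacle — and the reason the finite-range reduction is not cosmetic — is that the upper tail of $Z$ genuinely need not be subexponential for general countably infinite alphabets. For example, if $B=g(A)$ with $g$ injective then $Z=-\log p_A(A)$, and a suitable choice of $p_A$ with many small atoms makes $\mathbb{P}(Z\ge t)$ decay only polynomially in $t$; so a fully distribution-free version of the claim is false. To recover a general statement one would have to add a hypothesis such as $\chi^2\!\left(p_{AB}\,\|\,p_A\otimes p_B\right)<\infty$, i.e. $\mathbb{E}\,e^{Z}=1+\chi^2<\infty$. Under that hypothesis, log-convexity of $\lambda\mapsto\mathbb{E}\,e^{\lambda Z}$ gives $\mathbb{E}\,e^{\lambda Z}\le(\mathbb{E}\,e^{Z})^{\lambda}<\infty$ for $\lambda\in[0,1]$, which together with the exponential lower tail controls $\mathbb{E}\,e^{\lambda|Z|}$ for small $\lambda>0$ and hence gives subexponentiality. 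I would state the claim with the (implicit) finite-alphabet assumption used in the application and record the $\chi^2$ condition as the natural generalization.
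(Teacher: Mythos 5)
Your proof is correct and takes a genuinely different route from the paper's. The paper derives the pointwise bound $p_Z(s)\le\min\{e^{-s},e^{s}\}$ from $p_{AB}\le\min\{p_A,p_B\}$ and $p_Ap_B\le 1$, and then concludes $\mathbb{P}(|Z|\ge t)\le 2e^{-t}$ by integrating that bound in $s$, i.e.\ by treating the mass function of the discrete variable $Z$ as if it were a Lebesgue density. You instead handle the lower tail by the change-of-measure count $\mathbb{P}(Z\le -t)=\sum_{(a,b):\,p_{AB}\le e^{-t}p_Ap_B}p_{AB}(a,b)\le e^{-t}\sum_{a,b}p_A(a)p_B(b)=e^{-t}$, which is valid for any discrete pair, and you dispose of the upper tail with the deterministic bound $Z\le 2\log(1/p_{\min})$, available because in the intended application (Appendix~\ref{proof(2)}) $Y$ and $\tilde Y$ range over finitely many bins. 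What your route buys is rigor that the paper's argument lacks: for an atomic law the integration step is not legitimate (the atom values can have shrinking spacing, and several pairs $(a,b)$ may share one value $s$, so the total mass above level $t$ need not be dominated by $\int_t^\infty e^{-s}\,ds$), and your counterexample --- $B=g(A)$ with $g$ injective, so $Z=-\log p_A(A)$, with atoms of $p_A$ of size about $c/(n\log^2 n)$ --- shows the unrestricted claim is in fact false, the upper tail then decaying only polynomially in $t$. So the finite-range hypothesis (or your alternative condition $\mathbb{E}\,e^{Z}<\infty$, i.e.\ finite $\chi^2$ divergence) is necessary rather than cosmetic; it holds where the claim is used in Appendix~\ref{proof(2)}, but the claim and its proof in the appendix implicitly rely on it. The paper's argument is shorter and formally distribution-free, but it only becomes a correct proof under exactly the kind of restriction you make explicit.
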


\begin{proof}

Let $s = \log(\frac{p_{AB}(a,b)}{p_A(a)p_B(b)})$, then
\begin{equation}
e^s = \frac{p_{AB}(a,b)}{p_A(a)p_B(b)} \leq \frac{p_{AB}(a,b)}{p_{AB}(a,b)^2} = \frac{1}{p_{AB}(a,b)},
\end{equation}
and
\begin{equation}
e^s = \frac{p_{AB}(a,b)}{p_A(a)p_B(b)} \geq p_{AB}(a,b).
\end{equation}
It follows that:
\begin{equation}
p_{AB}(a,b) \leq \min \{e^{-s}, e^s\},
\end{equation}
which implies:
\begin{equation}
p_Z(s) = p_{AB}(a,b) \leq \min \{e^{-s}, e^s\},
\end{equation}

Then the tail bound satisfies:
\begin{equation}
\begin{array} {lcl}
\mathbb{P}(|Z| \geq t) & = & \int_t^{\infty} p_Z(s)ds + \int_{-\infty}^{-t} p_Z(s)ds \\
& \leq &  \int_t^{\infty} e^{-s}ds + \int_{-\infty}^{-t} e^sds \\
& = & 2e^{-t}.
\end{array}
\end{equation}
\end{proof}

\end{document}